\documentclass{article} 
\usepackage{iclr2026_conference,times}

\usepackage{hyperref}
\usepackage{url}
\usepackage{amsmath}
\usepackage{amssymb}
\usepackage{mathtools}
\usepackage{amsthm}
\usepackage{thmtools}
\usepackage{thm-restate}
\usepackage{chngcntr}

\usepackage[capitalize,noabbrev]{cleveref}

\usepackage{booktabs}
\usepackage{multirow}
\usepackage[table,xcdraw]{xcolor}
\usepackage{algorithm,algorithmic}
\usepackage{placeins}
\usepackage{capt-of}

\newtheorem{remark}{\textbf{Remark}}

\usepackage[textsize=tiny]{todonotes}

\newcommand{\bg}{\boldsymbol{g}}

\newcommand{\bone}{\boldsymbol{1}}

\newcommand{\bbP}{\mathbb{P}}

\newcommand{\mE}{\mathbb{E}}

\newcommand{\cD}{\mathcal{D}}

\newcommand{\cV}{\mathcal{V}}

\newcommand{\cA}{\mathcal{A}}

\newcommand{\cJ}{\mathcal{J}}

\makeatother

\newcommand{\think}[1]{\textcolor{blue}{\texttt{<think>}} #1 \textcolor{blue}{\texttt{</think>}}}
\newcommand{\search}[1]{\textcolor{cyan}{\texttt{<search>}} #1 \textcolor{cyan}{\texttt{</search>}}}
\newcommand{\info}[1]{\textcolor{brown}{\texttt{<information>}} #1 \textcolor{brown}{\texttt{</information>}}}
\newcommand{\answer}[1]{\textcolor{purple}{\texttt{<answer>}} #1 \textcolor{purple}{\texttt{</answer>}}}

\title{Turning Off-Policy Tokens On-Policy: A Plug-in Approach for Improving LLM Alignment}

\iclrpreprint

\renewcommand\footnotemark{}
\author{
    \!\!Yu Li$^{1,*}$, Xiuyu Li$^{1,*,\ddagger}$,  Mingyang Yi$^{1,\dagger}$, \\
    \textbf{Jiaxing Wang$^2$, Liangxu Zhang$^2$, Zhaolong Xing$^2$, Zhen Chen$^2$}\\
    \thanks{$^*$Equal contribution.}
    \thanks{$^\dagger$Corresponding author.}
    \thanks{The code is available at \url{https://github.com/liyu199809/SIS/tree/main}}\\
    $^1$Renmin University of China \quad $^2$JD.com \\
    \footnotesize{\texttt{{\{liyu0929,yimingyang\}@ruc.edu.cn}}} \\
}

\begin{document}

\maketitle

\begin{abstract}
Reinforcement learning (RL) post-training for large language models (LLMs) follows a efficient paradigm of ``rollout then update'', which inevitably results in off-policy training data. 
To resolve this, Importance sampling (IS) is proposed, while the token-level ratios compound over long sequences, causing severe variance exploded.
A natural idea is ``transferring'' these off-policy token into on-policy token, so that the importance scores for correction are unnecessary. Following this idea,  
we propose \emph{Selective Importance Sampling} (SIS), which is inspired by rejection sampling. Concretely, SIS implements by viewing off-policy model as proposal distribution, and implement a token-level rejection test: accepted tokens are viewed as on-policy, so that receive unit importance score, while rejected tokens retain the standard IS correction. 
Our proposed SIS is theoretically proved  reducing the gap between token-level and sequence-level off-policy gradient estimators. 
The SIS acts as a plug-in that only modifies the importance ratio in the policy loss, adding negligible wall-clock overhead, and can be combine with a vast vary of RL post-training algorithms. 
Experiments on dense and MoE LLMs across math and agent benchmarks show that SIS consistently improves all objectives, 
while providing substantially stronger robustness under off-policy data.
\end{abstract}

\section{Introduction}

Reinforcement learning (RL) \citep{sutton1998reinforcement} has become the dominant paradigm for post-training large language models, driving substantial progress in reasoning \citep{guo2025deepseek, chen2025minimax}, code generation \citep{jiang2025coderl+}, tool use \citep{li2026reasoning}, and broader capabilities \citep{li2026ets, li2026towards,zhang2026htam,zhang2026reward,tu2026ucob}. 
The RL process follows an order of ``rollout then update''.
For efficiency, techniques such as sample reuse \citep{noukhovitch2025faster}, asynchronous rollouts \citep{fu2026areal} are widely adopted, but inevitably introduce off-policy data from stale policies.
The resulting distribution mismatch introduces bias into gradient estimates and degrades training stability \citep{ma2025stabilizing}. This makes effective off-policy gradient correction a fundamental challenge for RL-based post-training methods.

Importance sampling (IS) \citep{tokdar2010importance} offers a principled correction to off-policy gradient by multiplying a correction ratio, but its variance can grow rapidly in long-horizon reasoning \citep{metelli2020importance}, and is sensitive to some outliers \citep{schulman2017ppo}. 
To mitigate this, mainstream methods \citep{schulman2017ppo,shao2024deepseekmath,yu2025dapo} rely on hard clipping. 
Although effective for stabilization, hard clipping suppresses gradients from highly off-policy samples and loses useful learning signal. 
Some recent methods \citep{zheng2025group, chen2025minimax, gao2025soft} mitigate this issue by soft clipping, but remain heuristic.

Instead of working on clipping method, we address this problem by ``transferring'' the off-policy tokens into on-policy tokens, which fundamentally resolve the off-policy problem. Our method, \emph{Selective Importance Sampling} (SIS), a simple and effective method conduct this transfer via acceptance-rejection sampling \citep{robert2004monte}.  
Concretely, SIS views the behavior policy create training data as a proposal distribution to the current policy. By implementing a rejection sampling, i.e., a token-level acceptance test,  
the accepted off-policy tokens from behavior policy \textbf{can be viewed as on-policy tokens} and receive unit weight; rejected tokens retain the standard IS correction.
To further improve our SIS, we propose an efficient approximation to the acceptance test process, making our SIS brought negligible efforts in both computation and efficiency.   
More than the practical method, we further prove that our SIS strictly reduces the cumulative distributional mismatch across the sequence \citep{ma2025stabilizing}, tightening the off-policy approximation error bound, so that theoretically improves the current post-training methods. 

Empirically, SIS delivers consistent gains as a plug-in module across diverse backbones models (dense and Mixture-of-Experts \citep{yang2025qwen3}), RL algorithms, and benchmarks (math \citep{maa2024aime} and agent tasks \citep{jin2025search}).
Moreover, it enhances training stability under challenging off-policy regimes, and ablation studies confirm that it is insensitive to hyperparameter choices.

Our contributions are as follows:
\begin{itemize}
\item We propose SIS, a plug-in method that can be combined with most mainstream RL post-training methods e.g., GRPO \citep{shao2024deepseekmath}, DAPO \citep{yu2025dapo}, GSPO \citep{zheng2025group}. The method mitigates off-policy mismatch in LLM post-training in a rejection sampling regime. 
\item We provide theoretical justification for SIS, proving that its token-level rejection mechanism reduces off-policy approximation error.
\item We conduct extensive experiments spanning diverse model architectures, RL algorithms, and task domains, demonstrating that SIS delivers consistent gains 
and stronger stability under challenging off-policy regimes.
\end{itemize}

\section{Background}
\label{sec:pre}
In this section, we present the necessary background knowledge of this paper.

\textbf{Off-Policy Policy Gradient.}
Given a query $x$ from the dataset $\cD$, the response denoted as $y = (y_1, y_2, \dots, y_T)$ is sampled from LLM $\pi_{\theta}(y\mid x)$. The response is evaluated by a reward model $r(x, y)$. To optimize the policy LLM, we usually maximize the following regularized expected reward.\footnote{Here we ignore the KL regularization \citep{ouyang2022training} to simplify the notation, our analysis can be similarly conducted with it involved.} 
\begin{equation}
\small
\mathcal J(\theta) = \mathbb{E}_{x\sim \cD, y \sim \pi_\theta(\cdot \mid x)}[r(x, y)],
\end{equation}
The policy gradient of this objective with respect to the policy parameters can be computed via the log-trick \citep{sutton1998reinforcement} by 
\begin{equation}
\small
\nabla_\theta \cJ(\theta) = \mathbb{E}_{x\sim \cD, y \sim \pi_\theta(\cdot \mid x)} \left[\left(A(x, y)\right) \nabla_\theta \log \pi_\theta(y \mid x)  \right],
\end{equation}
\normalsize
where $A(x, y)$ denotes the advantage associated with response $y$ under prompt $x$, which results in unbiased gradient estimator e.g., $A(x, y) = r(x, y) - c$ for some constant $c$ \citep{schulman2017ppo}.

In typical large-scale settings, due to the limitations of sample reuse, rollout efficiency e.t.c. \citep{fu2026areal}, online sampling from $\pi_\theta$ to estimate policy gradient is infeasible, and training is performed on a fixed set of responses collected from a stale behavior policy $\pi_{\theta_{\mathrm{old}}}$. To reproduce the unbiased gradient estimator, we usually use importance sampling correction ratio $w(\theta) = \frac{\pi_\theta(y \mid x)}{\pi_{\theta_{\mathrm{old}}}(y \mid x)}$
\begin{equation}\label{eq:ground truth gradient}
\small
\begin{aligned}
\nabla_\theta \cJ(\theta) = \mathbb{E}_{x\sim \cD, y \sim \pi_{\theta_{\mathrm{old}}(\cdot \mid x)}} \left[w(\theta)A(x, y) \nabla_\theta \log \pi_\theta(y \mid x)  \right].
\end{aligned}
\end{equation}
The sequence-level correction ratio $w(\theta)$ can be viewed as a factor to account for the discrepancy between the target and behavior policies. 
In practice, this ratio may become excessively large, leading to exploding gradients and unstable training \citep{schulman2017ppo,yu2025dapo}. 
To mitigate this, $w(\theta)$ is typically modified through clipping or normalization to ensure more stable training as proposed in \citep{schulman2017ppo,zheng2025group}.

\textbf{Token-level Importance Sampling.}
For autoregressive models, the sequence likelihood admits the following factorization
\begin{equation}
\small
    \pi_\theta(y \mid x) = \prod_{t=1}^{T} \pi_\theta(y_t \mid x, y_{<t}).
\end{equation}
Consequently, the sequence-level ratio $w(\theta)$ decomposes into a product of token-level ratios
\begin{equation}\label{eq:ratio}
\small
    w(\theta) = \prod_{t=1}^{T} w_t(\theta), \quad
    w_t(\theta) = \frac{\pi_\theta(y_t \mid x, y_{<t})}{\pi_{\theta_{\mathrm{old}}}(y_t \mid x, y_{<t})}.
\end{equation}
This product scales exponentially with sequence length, leading to severe variance in gradient estimation \citep{metelli2020importance}. 
To mitigate this, practical implementations replace the sequence-level importance ratio in gradient estimation by single token-level ratios \citep{ma2025stabilizing} and results in approximated gradient 
\begin{equation}\label{eq:gradient}
\small
\tilde{g}(\theta) = \mathbb{E}_{y \sim \pi_{\theta_{\mathrm{old}}}(\cdot \mid x)} \left[  \sum_{t=1}^{T} w_t(\theta) A(x, y)\nabla_\theta \log \pi_\theta(y_t \mid x, y_{<t}) \right].
\end{equation}
Although $\tilde{g}(\theta)$ does not equal the true policy gradient $\nabla_\theta \cJ(\theta)$, it provides a reliable approximation as long as $\pi_\theta$ stays sufficiently close to $\pi_{\theta_{\mathrm{old}}}$\citep{kakade2002approximately,zheng2025stabilizing}, since $w_{t}(\theta) = \pi_\theta(y_t \mid x, y_{<t})/\pi_{\theta_{\mathrm{old}}}(y_t \mid x, y_{<t})\approx 1$. 
When the two policies diverge, this approximation gap grows, producing biased gradient estimates that destabilize training.
Controlling this gap is therefore central to reliable off-policy LLM training; we formalize this in Section~\ref{sec:theoretical_analysis}.

\section{Method: Turning Off-Policy Tokens On-Policy}
\label{sec:method_main}
In this section, we propose our  \emph{Selective Importance Sampling} (SIS), a \textbf{simple yet effective method} building upon rejection sampling that directly converts off-policy tokens into on-policy ones at the token level, so that obviate the aforementioned problems brought by off-policy data. 
Beyond the SIS, we propose an efficient implementation of it to make it a plug-in with negligible efforts of existing algorithms. Finally, we prove the theoretical improvements of SIS in terms of provably tight sequence-versus-token approximation error bound.   
\begin{figure}[t!]
  \centering
\begin{center}
\vspace{-0.3in}
  \includegraphics[width=\linewidth]{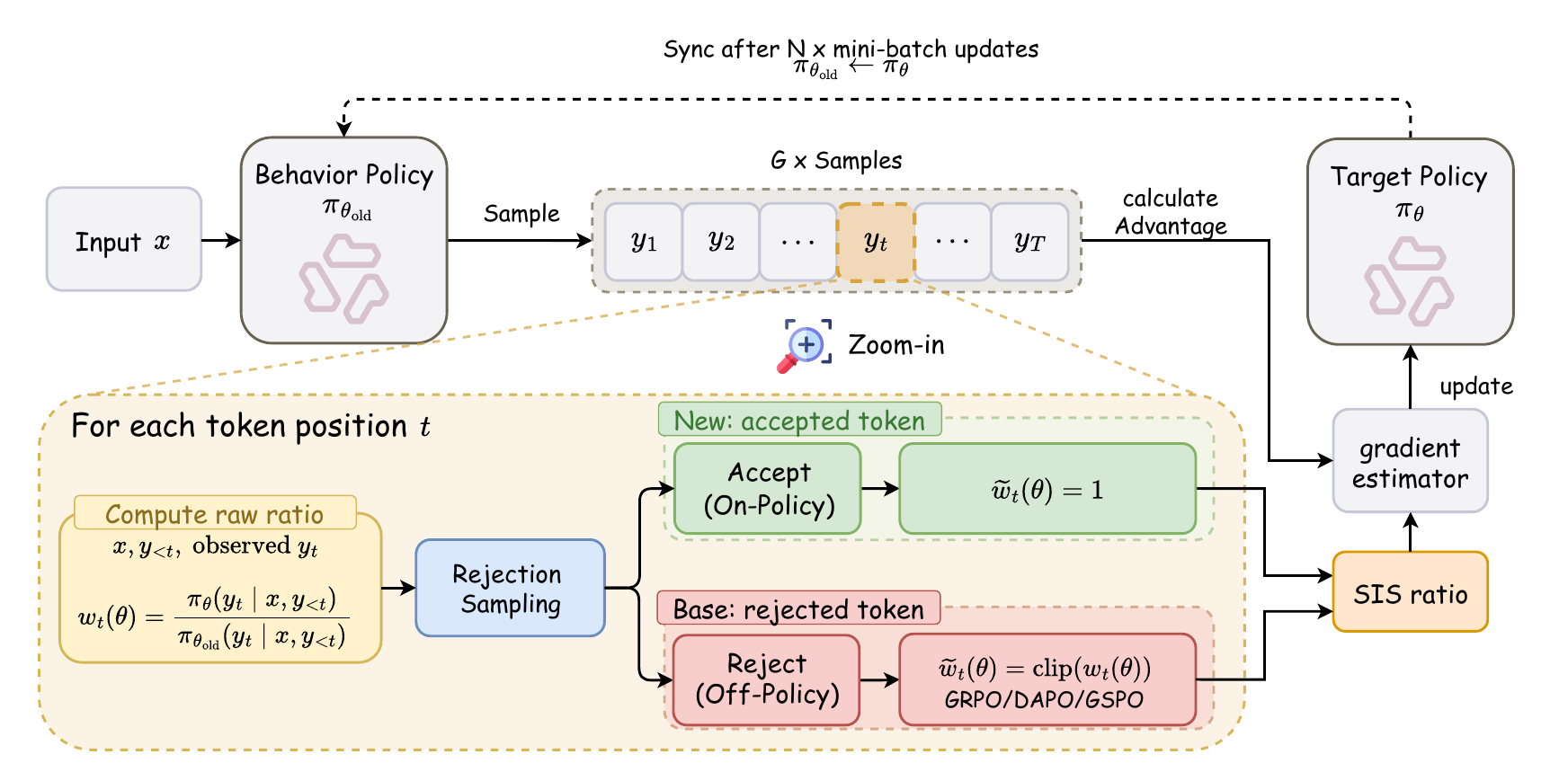}
\end{center}
  \vspace{-0.1in}
  \caption{Overview of SIS. At each token position, SIS applies a rejection sampling to determine whether the token can be treated as on-policy ($\widetilde{w}_t(\theta)=1$) or should retain its off-policy importance ratio. The modified ratios are then plugged into the gradient estimator of any existing algorithm.}
  \label{fig:sis}
\end{figure}

\vspace{-0.1in}
\subsection{SIS: Converting Off-Policy Tokens via Rejection Sampling}
\label{sec:rejection}

The high-level idea of SIS is straightforward (see Figure~\ref{fig:sis} for an illustration): at each token position, 
the behavior policy $\pi_{\theta_{\mathrm{old}}}$ can be viewed as a proposal distribution to sample from current policy $\pi_{\theta}$ under acceptance-rejection sampling. Then, by a acceptance-rejection test, the accepted tokens from behavior policy can be recognized from current policy, and become on-policy tokens with identical importance ratio; otherwise, the token retains its original importance ratio as in existing methods. 
In this way, SIS reduces the variance of importance ratios while preserving gradient signal.

Concretely, for each position $t$ with prefix $(x, y_{<t})$, 
we compute the constant 
\begin{equation}
\label{eq:envelope}
\small
M_t = \max_{v \in \mathcal{V}} \frac{\pi_\theta(v \mid x, y_{<t})}{\pi_{\theta_{\mathrm{old}}}(v \mid x, y_{<t})},
\end{equation}
where $\mathcal{V}$ denotes the vocabulary, $\pi_{\theta_{\mathrm{old}}}(\cdot \mid x, y_{<t})$, and $\pi_{\theta}(\cdot \mid x, y_{<t})$ are respectively proposal and target distributions. Given the observed token $y_t$ with ratio $w_t(\theta) = \frac{\pi_\theta(y_t \mid x, y_{<t})}{\pi_{\theta_{\mathrm{old}}}(y_t \mid x, y_{<t})}$, we draw an acceptance indicator
\begin{equation}\label{eq:accept}
\small
z_t \sim \mathrm{Bernoulli}\!\left(\frac{w_t(\theta)}{M_t}\right).
\end{equation}
By the standard theory of rejection sampling, acceptance ($z_{t} = 1$) certifies that $y_t$ follows exactly the target distribution $\pi_\theta(y_{t}\mid x, y<t)$. We state this process formally as below.

\begin{restatable}{proposition}{certificate}
\label{prop:certificate}
Let $y_t \sim \pi_{\theta_{\mathrm{old}}}(\cdot \mid x, y_{<t})$ and $z_t \sim \mathrm{Bernoulli}(w_t(\theta) \,/\, M_t)$ as defined in \eqref{eq:accept}. Then
\begin{equation}
\small
    \mathbb{P}(y_t = v \mid z_t = 1) = \pi_\theta(v \mid x, y_{<t}), \quad \forall\, v \in \mathcal{V}.
\end{equation}
\end{restatable}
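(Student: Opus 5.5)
The plan is to run the standard acceptance–rejection computation with Bayes' rule, working conditionally on the prefix $(x, y_{<t})$ throughout. Write $p(v) = \pi_{\theta_{\mathrm{old}}}(v \mid x, y_{<t})$ and $q(v) = \pi_\theta(v \mid x, y_{<t})$, so that $M_t = \max_{v} q(v)/p(v)$.

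\textbf{Validity of the Bernoulli parameter.} First I would check that $w_t(\theta)/M_t$ lies in $[0,1]$ for every token that can actually be drawn, i.e.\ every $v$ with $p(v) > 0$. This is immediate from the definition of $M_t$ as a maximum. I would also assume the support condition $q(v) > 0 \Rightarrow p(v) > 0$, which holds automatically for softmax policies since they put positive mass on every token. This keeps $M_t$ finite and ensures $p$ covers the full support of $q$.

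\textbf{Joint probability of a token and acceptance.} For each $v$, the probability of drawing $v$ and then accepting is
\[
\mathbb{P}(y_t = v, z_t = 1) = p(v)\cdot \frac{q(v)/p(v)}{M_t} = \frac{q(v)}{M_t},
\]
and this equals $0$ when $p(v) = 0$, because then $q(v) = 0$ as well.

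\textbf{Normalization.} Summing the joint probability over $\mathcal{V}$ gives
\[
\mathbb{P}(z_t = 1) = \frac{1}{M_t}\sum_{v} q(v) = \frac{1}{M_t}.
\]
This is strictly positive, since $M_t \ge 1$ is finite. One way to see $M_t \ge 1$: if $q(v) < p(v)$ held for every $v$, the two distributions could not both sum to one. So conditioning on $\{z_t = 1\}$ is well defined.

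\textbf{Conclusion.} Dividing the joint probability by the acceptance probability gives $\mathbb{P}(y_t = v \mid z_t = 1) = q(v)$ for every $v \in \mathcal{V}$, which is the claim.

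The only genuinely delicate step is the support and finiteness issue behind $M_t$. Where $p(v) = 0$, the ratio $q(v)/p(v)$ is undefined. I would handle this by taking the maximum only over $\mathrm{supp}(p)$ and invoking the support condition above; under softmax parametrization this is vacuous. Everything else is routine.
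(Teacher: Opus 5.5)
Your proof is correct and follows the same route as the paper. Both use Bayes' rule with per-token acceptance probability $w_t(v)/M_t$, compute the marginal acceptance probability $1/M_t$, and divide to recover $\pi_\theta(v\mid x,y_{<t})$. Your extra checks are details the paper leaves implicit: that $w_t/M_t\le 1$, that $M_t\ge 1$ is finite under the support condition, and that conditioning on $\{z_t=1\}$ is therefore well defined.
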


Proposition~\ref{prop:certificate} states that, although $y_t$ is sampled from $\pi_{\theta_{\mathrm{old}}}$, its conditional distribution over $z_t=1$ becomes the distribution of $\pi_\theta$. In other words, accepted tokens are already on-policy samples and require no importance correction. 
Setting their weight to one therefore introduces \emph{zero bias}. 
Accordingly, the modified token-level ratio
\begin{equation}
\small
\widetilde{w}_t(\theta) =
\begin{cases}
1, & z_t = 1 \quad \text{(on-policy, weight removed)}, \\
w_t(\theta), & z_t = 0 \quad \text{(off-policy, weight retained)},
\end{cases}
\label{eq:modified_ratio}
\end{equation}
plugged in \eqref{eq:gradient} results in identical gradient estimation. 

We visualize the accepted rate of the rejection process in Figure~\ref{fig:accept}(A). As can be seen, the accept rate is \textbf{substantial} in both scenarios, demonstrating that a large fraction of tokens can be converted from off-policy to on-policy. 
This indicates that \emph{the modification introduced by SIS is no trivial}.
The lower accept rate in the agentic setting is expected, as interleaving reasoning with diverse tool calls introduces greater distributional shift between $\pi_{\theta_{\mathrm{old}}}$ and $\pi_\theta$.

\begin{figure}[t!]
  \centering
\begin{center}
  \includegraphics[width=\linewidth]{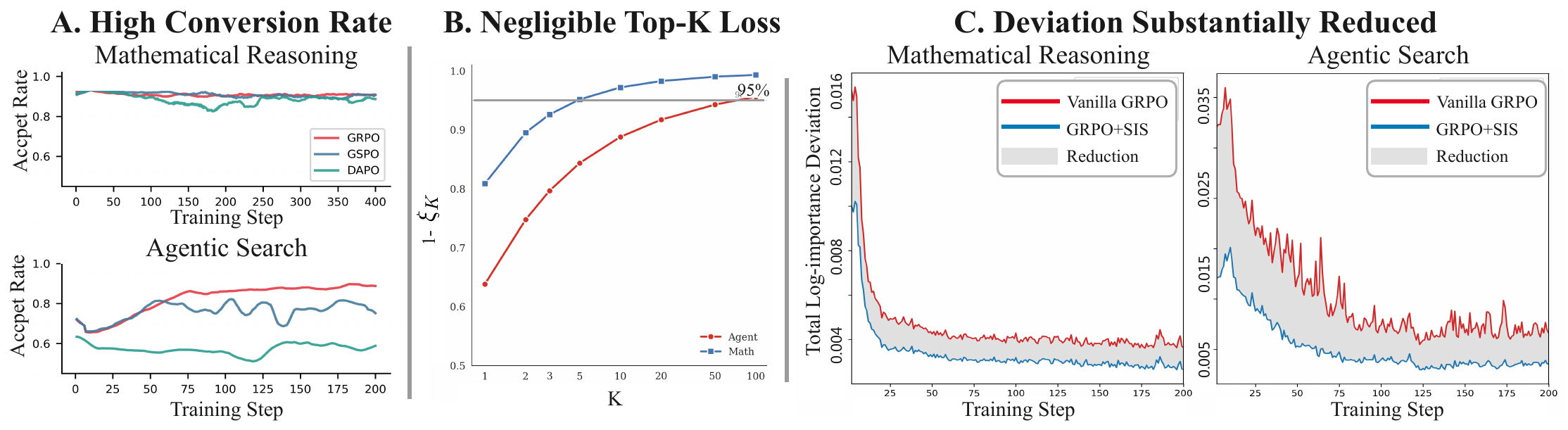}
\end{center}
  \vspace{-0.1in}
  \caption{Empirical validation of the three components of SIS on Qwen3-8B-Base. \textbf{(A)}~Accept rate of rejection sampling on math and agentic search; higher means more off-policy tokens are converted to on-policy (Appendix~\ref{app:accept_rate_extra} for 30B-MoE and 14B). \textbf{(B)}~Residual mass $\xi_K$ outside the top-$K$ set (upper) and cumulative coverage (lower) versus $K$; lower $\xi_K$ and coverage near $1.0$ indicate a tight top-$K$ envelope. \textbf{(C)}~Total log-importance deviation $D$ for vanilla GRPO vs.\ GRPO+SIS; a lower curve and larger shaded gap correspond to a tighter optimization bound.}
  \label{fig:accept}
\end{figure}
\begin{remark}
    In standard PPO algorithm, $w_{t}(\theta)$ is processed by hard clipping as in \eqref{eq:grpo}. This is originated from Trust-Region Policy Optimization (TRPO) \citep{schulman2015trust,qi2026rethinking}, which regularize the KL divergence between behavior policy and current policy. However, as pointed in \citep{qi2026rethinking}, the clipping heavily relies on the sampled token, which introduces large noise in determining trust-region. However, our rejection process distributionally compare the behavior policy and current policy, so that also mitigate the noise introduced by sampling process. 
\end{remark}

\subsection{Practical Top-$K$ Approximation}
\label{sec:approximation}
While the above rejection sampling procedure provides an unbiased mechanism for turning off-policy tokens on-policy, computing the maximal constant $M_t$ in \eqref{eq:envelope} requires maximizing over the entire vocabulary, which is computationally prohibitive. 
To mitigate this, we introduce a top-$K$ approximation to this constant.

Our method is built upon the fact that the probability mass of LLM is concentrated on a small subset of vocabulary, so that we can approximate $M_{t}$ with the maximization restricted to the top-$K$.
Concretely, we define
\begin{equation}
\small
    \widehat M_t = \max_{v \in \cV_K} \frac{\pi_\theta(v \mid x, y_{<t})}{\pi_{\theta_{\mathrm{old}}}(v \mid x, y_{<t})}, \quad \cV_K = \mathrm{TopK}(\pi_{\theta_{\rm old}}(\cdot\mid x, y_{<t}),\, K),
\end{equation}
and replace $M_{t}$ in \eqref{eq:accept} with it. Under this approximation, when computing acceptance indicator $z_{t}$ \eqref{eq:accept}, only tokens within $\cV_K$ are considered during rejection sampling, while tokens outside this set are still off-policy data. The approximated algorithm is summarized in Algorithm~\ref{alg:sis} in Appendix~\ref{app:algorithm}. In this regime, we have the following proposition. 
\begin{restatable}{proposition}{topkapprox}
\label{prop:topk_approx}
When approximating SIS with Algorithm~\ref{alg:sis}, the accepted tokens follows conditional distribution $\hat\pi_\theta(v \mid x, y_{<t}) = \pi_{\theta}(v\mid x, y_{<t}, v\in \cV_{K})$. Beside that, 
let $\xi_K = \sum_{v \notin \cV_K} \pi_\theta(v \mid x, y_{<t})$ denote the probability mass of the target policy outside the $\cV_{K}$, then we have 
\begin{equation}
\small
    D_{\rm TV}\!\left(\hat\pi_\theta(\cdot \mid x, y_{<t}) \parallel \pi_\theta(\cdot \mid x, y_{<t})\right) = \xi_K,
\end{equation}
where $D_{\rm TV}$ is the total variation distance. 
\end{restatable}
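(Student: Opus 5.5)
The plan is to compute directly the joint law of $(y_t, z_t)$ under the approximate acceptance rule, condition on acceptance, and then evaluate the total variation distance in closed form. Throughout, fix the prefix $(x, y_{<t})$ and write $\pi_\theta(v)$ and $\pi_{\mathrm{old}}(v)$ for the conditional next-token probabilities.

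First I will pin down the acceptance mechanism of Algorithm~\ref{alg:sis}, as described in Section~\ref{sec:approximation}.
\begin{itemize}
\item A token $v \notin \cV_K$ is never accepted: $\mathbb{P}(z_t=1 \mid y_t=v)=0$.
\item A token $v \in \cV_K$ is accepted with probability $w_t(\theta)/\widehat M_t = \pi_\theta(v)/(\pi_{\mathrm{old}}(v)\widehat M_t)$.
\end{itemize}
This is a valid Bernoulli parameter because $\widehat M_t$ is by definition the maximum of the ratio over $\cV_K$, so the ratio is at most $1$ there. Note that $\pi_{\mathrm{old}}(v)>0$ on $\cV_K$ whenever $K$ does not exceed the support size, which I will assume; otherwise restrict $\cV_K$ to the support.

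Next, mirroring the standard argument behind Proposition~\ref{prop:certificate}, I compute the joint probability
\begin{equation*}
\mathbb{P}(y_t=v, z_t=1) = \pi_{\mathrm{old}}(v)\cdot \frac{\pi_\theta(v)}{\pi_{\mathrm{old}}(v)\widehat M_t} = \frac{\pi_\theta(v)}{\widehat M_t}\quad \text{for } v\in\cV_K,
\end{equation*}
and $\mathbb{P}(y_t=v, z_t=1)=0$ for $v\notin\cV_K$. Summing over $v$ gives $\mathbb{P}(z_t=1) = (1-\xi_K)/\widehat M_t$. This is positive provided $\xi_K<1$, a mild condition that must be stated since otherwise the conditional law is undefined.

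Dividing the two quantities gives
\begin{equation*}
\mathbb{P}(y_t=v\mid z_t=1) = \frac{\pi_\theta(v)\,\bone[v\in\cV_K]}{1-\xi_K},
\end{equation*}
which is exactly $\pi_\theta(v \mid x, y_{<t}, v\in\cV_K) = \hat\pi_\theta(v)$. This proves the first claim.

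For the total variation identity, I will use $D_{\rm TV}(p \parallel q)=\tfrac12\sum_v |p(v)-q(v)|$ and split the sum over $\cV_K$ and its complement.
\begin{itemize}
\item On $\cV_K$, each term equals $\pi_\theta(v)\bigl(\tfrac{1}{1-\xi_K}-1\bigr) = \pi_\theta(v)\,\tfrac{\xi_K}{1-\xi_K}$. Summing, and using that the $\pi_\theta$-mass of $\cV_K$ is $1-\xi_K$, gives $\xi_K$.
\item Off $\cV_K$, each term is $\pi_\theta(v)$, and these sum to $\xi_K$.
\end{itemize}
Halving the total $2\xi_K$ yields $\xi_K$. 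Alternatively, one can use the identity that the TV distance equals the total positive part of $q-p$, which here is exactly the mass of $\pi_\theta$ outside $\cV_K$.

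The computations themselves are routine. The only real obstacle is being precise about the algorithm's behavior, namely that out-of-$\cV_K$ tokens have acceptance probability zero rather than being tested against $\widehat M_t$. It also requires noting the non-degeneracy conditions $\xi_K<1$ and $\pi_{\mathrm{old}}>0$ on $\cV_K$ so that the conditioning and the ratios are well defined.
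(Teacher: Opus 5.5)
Your proposal is correct and matches the paper's proof essentially step by step: Bayes' rule with zero acceptance off $\cV_K$ gives the renormalized restriction $\pi_\theta(v)\bone[v\in\cV_K]/(1-\xi_K)$, and splitting the total-variation sum over $\cV_K$ and its complement gives $\tfrac12\xi_K+\tfrac12\xi_K=\xi_K$. Your non-degeneracy conditions ($\xi_K<1$, and $\pi_{\theta_{\mathrm{old}}}>0$ on $\cV_K$) are only implicit in the paper, so stating them is a small improvement rather than a change of route.
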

The proposition indicates that the ``on-policy'' data obtained by our efficiency-aware implementation follows a conditional distribution, which is close to the target when $\xi_{K}\to 0$. Since the top-$K$ tokens of $\pi_{\theta}$ and $\pi_{\theta_{\rm old}}$ will be overlapped with the increasing of $K$, and only a small number of high-probability tokens concentrate the vast majority of the probability mass, the $\xi_K$ remains small in practice, even for a relative small $K$, e.g., $K = 5$ (see Figure~\ref{fig:accept}(B)). 

\textbf{Complexity analysis.}
Beyond approximation quality, the top-$K$ formulation also keeps computational cost low. 
SIS reuses the old-policy logits from generation and the current-policy logits from training, requiring no extra model forward pass.
The only additional work per token is selecting the top-$K$ candidates under the old policy 
and computing their acceptance probabilities from cached logits, which adds only $\sim 1\%$ wall-clock overhead per training step (see Appendix~\ref{app:overhead}).

\subsection{Why SIS Works: A Theoretical Perspective}
\label{sec:theoretical_analysis}

The preceding subsections established \textbf{what} SIS does: selectively resetting off-policy tokens to on-policy via rejection sampling (\S\ref{sec:rejection}), and \textbf{how} to implement it efficiently via a top-$K$ approximation process (\S\ref{sec:approximation}). We now turn to \textbf{why} this mechanism is effective by revisiting the fundamental question raised in \eqref{eq:gradient} of Section~\ref{sec:pre}: \emph{how does the gap between the token-level and sequence-level gradient estimates grow as the behavior policy drifts, and can SIS provably reduce it?}

To answer this, we first characterize the approximation error $E = \|\bg_{\rm{seq}} - \bg_{\rm{tok}}\|$ between the ground-truth gradient $\bg_{\rm{seq}}$ \eqref{eq:ground truth gradient} with sequence-level correction ratio \eqref{eq:ratio} $w(\theta)$ and its surrogate $\bg_{\rm{tok}}$ \eqref{eq:gradient} with token-level correction ratio $w_{t}(\theta)$ \eqref{eq:ratio}. 

\begin{restatable}{theorem}{errorbound}
\label{thm:error_bound}
For a sequence of length $T$, we have 
\begin{equation}\label{eq:error bound}
\small
E \le |A(x, y)|\, (e^D - 1) \sum_{t=1}^T \|s_t\|,
\end{equation}
where $D = \sum_{t=1}^T |\!\log w_t(\theta)|$, $A(x, y)$ is the advantage and $s_t = \nabla_\theta \log \pi_\theta(y_t \mid x, y_{<t})$.
\end{restatable}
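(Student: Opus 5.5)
The bound is per-sequence, so I will treat $\bg_{\rm seq}$ and $\bg_{\rm tok}$ as the integrands in \eqref{eq:ground truth gradient} and \eqref{eq:gradient} for a fixed response $y$. That is,
\[
\bg_{\rm seq}=w(\theta)A(x,y)\sum_{t=1}^T s_t,\qquad \bg_{\rm tok}=A(x,y)\sum_{t=1}^T w_t(\theta)s_t .
\]
Here I use that $\nabla_\theta\log\pi_\theta(y\mid x)=\sum_t s_t$ by the autoregressive factorization. Subtracting and applying the triangle inequality gives
\[
E\le |A(x,y)|\sum_{t=1}^T |w(\theta)-w_t(\theta)|\,\|s_t\|.
\]
It therefore suffices to show that $|w(\theta)-w_t(\theta)|\le e^D-1$ for every $t$.

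For this coefficient bound, write $\ell_t=\log w_t(\theta)$, so $D=\sum_t|\ell_t|$. Then $w(\theta)=e^{\sum_s\ell_s}$ and $w_t(\theta)=e^{\ell_t}$. Factor
\[
w(\theta)-w_t(\theta)=e^{\ell_t}\bigl(e^{R_t}-1\bigr),\qquad R_t=\sum_{s\ne t}\ell_s .
\]
Using $|e^{u}-1|\le e^{|u|}-1$, together with $|R_t|\le D-|\ell_t|$ and $e^{\ell_t}\le e^{|\ell_t|}$, we get
\[
|w-w_t|\le e^{|\ell_t|}\bigl(e^{D-|\ell_t|}-1\bigr)=e^{D}-e^{|\ell_t|}\le e^D-1 .
\]
The last step uses $|\ell_t|\ge 0$. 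Summing over $t$ then yields \eqref{eq:error bound}.

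The main obstacle is the sign bookkeeping in the factorization step. One has to choose the factorization that pulls out $e^{\ell_t}$, rather than expanding the full product, so that the extra factor is absorbed and the bound is exactly $e^D-1$ instead of something like $e^D(e^D-1)$. The other point to settle is the interpretation of $E$. If the paper means the norm of the expectation, I would add one step: by Jensen, $\|\mE[\cdot]\|\le\mE\|\cdot\|$, so the pointwise bound transfers, with the right-hand side read inside the expectation. The statement as written, with $A(x,y)$ and $s_t$ appearing explicitly, suggests the per-sample reading, and that is the one I would adopt.
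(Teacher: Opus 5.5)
Your proposal is correct and follows the paper's proof essentially step for step. Both use the per-response triangle inequality, the factorization $w(\theta)-w_t(\theta)=w_t(\theta)\bigl(\prod_{j\neq t}w_j(\theta)-1\bigr)$, the inequality $|e^u-1|\le e^{|u|}-1$, and the final bound $e^{D}-e^{|\log w_t(\theta)|}\le e^D-1$; the paper also reads $E$ per sample (``for a single response''), as you do.
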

The proposition is proved in Appendix~\ref{app:proof_approx}. 
As can be seen, the total log-importance deviation $D = \sum_{t=1}^T |\!\log w_t(\theta)|$ serves as the key quantity governing the approximation error $E$, which is consistent to the fact that for the gradient on each token, the real correction $w(\theta) = \prod_{t=1}^{T}w_{t}(\theta)$ is replaced with $w_{t}(\theta)$. 
This exponential dependence explains why off-policy training destabilizes rapidly as the behavior policy drifts. The drift makes $w_{t}(\theta)\not\approx 1$ ($1 \leq t\leq T$) so that $w_{t}(\theta)\not\approx w(\theta) = \prod_{t=1}^{T}w_{t}(\theta)$. Next, we use a proposition to illustrate that estimating the policy gradient with our SIS strictly reduces $D$, thereby tightening the error bound \eqref{eq:error bound}.
\begin{restatable}{proposition}{errorreduction}
\label{prop:error_reduction}
Under the same conditions as Theorem~\ref{thm:error_bound}, replacing the correction ratios $w_t(\theta)$ in \eqref{eq:gradient} with the ratios $\widetilde{w}_t(\theta)$  \eqref{eq:modified_ratio} in SIS yields
\begin{equation}
\label{eq:sis_bound}
\small
E \le |A(x,y)|\,(e^{D_{\mathrm{SIS}}} - 1) \sum_{t=1}^T \|s_t\|
\le |A(x,y)|\,(e^{D} - 1) \sum_{t=1}^T \|s_t\|,
\end{equation}
where $D_{\mathrm{SIS}} = \sum_{t=1}^T \,|\!\log\widetilde w_t(\theta)|= \sum_{t=1}^T (1-z_t)\,|\!\log w_t(\theta)|\leq D$ for $z_{t}$ defined in \eqref{eq:accept}.
\end{restatable}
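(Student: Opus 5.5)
The plan is to reuse the argument behind Theorem~\ref{thm:error_bound} with the ratios $w_t(\theta)$ replaced by $\widetilde w_t(\theta)$, and then compare the two exponents. The first step is to recall the structure of that bound. For a fixed sequence, the token-level surrogate weights the score $s_t$ by $w_t(\theta)$, while the sequence-level gradient weights it by $w(\theta)=\prod_{t'} w_{t'}(\theta)$. The per-token discrepancy is
\begin{equation*}
w_t(\theta)\,\Bigl|\prod_{t'\neq t} w_{t'}(\theta)-1\Bigr|,
\end{equation*}
and it is controlled by the elementary inequality $|e^{a}-1|\le e^{|a|}-1$. Applied with $a=\sum_{t'\neq t}\log w_{t'}(\theta)$, this gives a bound by $e^{D}-1$ up to the factor $w_t$. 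The Theorem absorbs that factor, so the proof only uses the quantities $|\log w_t(\theta)|$ through their sum $D$.

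Next comes the SIS estimator, which uses $\widetilde w_t(\theta)$. By Proposition~\ref{prop:certificate}, accepted tokens are exactly distributed as $\pi_\theta$ given the prefix. So the sequence-level target gradient can equivalently be written with each accepted token carrying the factor $1=\widetilde w_t(\theta)$ in place of $w_t(\theta)$. The target then becomes $\prod_t \widetilde w_t(\theta)$ times the score terms, and I would argue this reweighting introduces zero bias, as claimed after \eqref{eq:modified_ratio}. With that identification, the proof of Theorem~\ref{thm:error_bound} goes through verbatim, yielding the first inequality in \eqref{eq:sis_bound} with
\begin{equation*}
D_{\mathrm{SIS}}=\sum_t |\log \widetilde w_t(\theta)|.
\end{equation*}

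The rest is routine. When $z_t=1$ we have $\log\widetilde w_t=\log 1=0$, and when $z_t=0$ we have $\widetilde w_t=w_t$. Hence $|\log\widetilde w_t|=(1-z_t)|\log w_t|$, which gives the stated formula for $D_{\mathrm{SIS}}$. Since $0\le 1-z_t\le 1$, each summand is at most $|\log w_t|$, so $D_{\mathrm{SIS}}\le D$. Because $u\mapsto e^{u}-1$ is increasing, the second inequality follows, using $|A|\ge0$ and $\sum_t\|s_t\|\ge0$.

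The main obstacle is the first inequality: justifying that Theorem~\ref{thm:error_bound} applies with $\widetilde w_t$ in both the surrogate and the reference gradient. The difficulty is that $\prod_t\widetilde w_t$ is not pathwise equal to $w(\theta)$. The equality only holds in distribution, after conditioning on acceptance via Proposition~\ref{prop:certificate}. If this identification proves delicate, I would fall back to a weaker reading. In that reading, $E$ is the error between the SIS surrogate and the sequence gradient built from the same modified ratios. The Theorem's algebraic bound then applies directly, since its proof is purely pathwise in the ratios.
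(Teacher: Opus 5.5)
Your fallback reading is correct and is exactly what the paper's proof does. Your primary route, however, rests on a step that fails, and it fails in expectation as well as pathwise.

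\textbf{The failing step.} You identify the true sequence-level gradient $A(x,y)\,w(\theta)\sum_t s_t$ with $A(x,y)\prod_t\widetilde w_t(\theta)\sum_t s_t$ via Proposition~\ref{prop:certificate}. For any function $f$ of the token, a direct computation gives
\[
\mathbb{E}_{y_t\sim\pi_{\theta_{\mathrm{old}}},\,z_t}\bigl[\widetilde w_t\,f(y_t)\bigr]=\sum_{v\in\mathcal{V}}\pi_\theta(v\mid x,y_{<t})\,f(v)\Bigl(1+\frac{1-w_t(v)}{M_t}\Bigr).
\]
Taking $f\equiv 1$ gives $\mathbb{E}[\widetilde w_t]=1-\bigl(\mathbb{E}_{v\sim\pi_\theta}[w_t(v)]-1\bigr)/M_t<1=\mathbb{E}[w_t]$, unless the two conditionals coincide. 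The reason is that, given rejection, $y_t$ follows the residual $(\pi_{\theta_{\mathrm{old}}}-\pi_\theta/M_t)/(1-1/M_t)$, not $\pi_{\theta_{\mathrm{old}}}$. Keeping $w_t$ on rejected tokens therefore does not correct them, and the ``zero bias'' remark you lean on does not hold as stated.

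More decisively, with the original reference the first inequality in \eqref{eq:sis_bound} is false. Consider the positive-probability event that every token is accepted. Then $D_{\mathrm{SIS}}=0$, so the bound claims $E=0$. But the SIS surrogate reduces to $A(x,y)\sum_t s_t$, so $E=|A(x,y)|\,|w(\theta)-1|\,\|\sum_t s_t\|$, which is generally positive. No refinement of the primary route can therefore succeed.

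\textbf{The fallback is the right reading.} The statement holds only when $E$ compares the SIS surrogate $A(x,y)\sum_t\widetilde w_t(\theta)s_t$ with the modified reference $A(x,y)\prod_t\widetilde w_t(\theta)\sum_t s_t$. In that reading the proof of Theorem~\ref{thm:error_bound} is purely algebraic in the positive numbers $\widetilde w_t(\theta)$: it uses only $|e^a-1|\le e^{|a|}-1$ and $\widetilde w_t\le e^{|\log\widetilde w_t|}$. It therefore applies verbatim, and this is what the paper's one-line appeal to ``applying Theorem~\ref{thm:error_bound} to the SIS gradient estimate'' amounts to.

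Your remaining steps coincide with the paper's: $|\log\widetilde w_t|=(1-z_t)|\log w_t|$, then $D_{\mathrm{SIS}}\le D$, then monotonicity of $u\mapsto e^u-1$. The paper phrases this as $D_{\mathrm{SIS}}=D-C$ with $C=\sum_t z_t|\log w_t(\theta)|\ge 0$.

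So commit to the fallback, state explicitly which reference gradient $E$ refers to, and drop the Proposition~\ref{prop:certificate} identification entirely.
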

The proof to this proposition is in Appendix~\ref{app:proof_approx}. Intuitively, the correction ratio $w(\theta) = \prod_{t = 1}^{T}w_{t}(\theta)$ is contributed by each off-policy token $w_{t}(\theta)$. However, our SIS transfers most off-policy tokens (see Figure \ref{fig:accept}) into on-policy tokens, which strictly reduces the bias $D$ of Theorem \ref{thm:error_bound} into  $D_{\mathrm{SIS}}$ in Proposition \ref{prop:error_reduction}. 
Figure~\ref{fig:accept}(C) corroborates this empirically: SIS consistently maintains a lower deviation than vanilla GRPO on both mathematical reasoning and agentic search tasks, directly translating to a tighter approximation error bound and more stable policy optimization.

\subsection{Plugging SIS into Existing Algorithms}
\label{sec:instantiation}
Notably, our SIS is algorithm-agnostic: it only modifies the token-level correction ratio $w_t(\theta) \to \widetilde{w}_t(\theta)$ (cf.\ \eqref{eq:modified_ratio}), while leaving the rest of the components unchanged. Thus it can be combined with any RL-post training algorithms implemented on off-policy data e.g., GRPO \citep{shao2024deepseekmath}, DAPO \citep{yu2025dapo}, GSPO \citep{zheng2025group}. Below we illustrate the combination with with GRPO; instantiations with more algorithms are given in Appendix~\ref{app:instant}. For a prompt $x$ from query set $\cD$ and a group of $G$ responses $\{y_i\}_{i=1}^G \sim \pi_{\theta_{\mathrm{old}}}(\cdot \mid x)$, we write $\widetilde{w}_{i,t}(\theta)$ for the SIS-modified ratio of the $t$-th token in $y_i$.

\textbf{GRPO + SIS.}
Applying our proposed SIS to GRPO~\citep{shao2024deepseekmath} yields
\begin{equation}\label{eq:grpo}
    \small
\begin{aligned}
    \cJ_{\rm GRPO}^{\rm SIS}(\theta) &= \mE_{x\sim \cD, \{y_i\}_{i=1}^G\sim\pi_{\theta_{\rm old}}(\cdot\mid x)} \\ &\left[\frac{1}{G}\sum_{i=1}^G \frac{1}{|y_i|} \sum_{t=1}^{|y_i|} \min\!\left(\widetilde{w}_{i,t}(\theta) \widehat{A}_{i},\; \mathrm{clip}(\widetilde{w}_{i,t}(\theta), 1-\varepsilon, 1+\varepsilon) \widehat{A}_{i} \right) - \beta D_{\rm KL}(\pi_\theta \parallel \pi_{\rm ref})\right],
\end{aligned}
\end{equation}
where $\widehat{A}_{i}$ is the standard group-based normalized advantage. The only modification is substituting $w_{i,t}(\theta)\to\widetilde{w}_{i,t}(\theta)$, confirming the plug-in nature of SIS.

\section{Experiments}

We empirically verify SIS along three axes: (1) consistent plug-in gains across policy-gradient algorithms, backbones, and tasks; (2) improved training stability under challenging off-policy regimes; (3) robustness to its hyperparameter and interpretable token-level selectivity.

\vspace{-2mm}
\subsection{Experimental Setup}
\label{sec:exp_setup}

\textbf{Models and Datasets.}
We conduct experiments with three Qwen3~\citep{yang2025qwen3} backbones: the dense Qwen3-8B-Base, Qwen3-14B-Base, and the Mixture-of-Experts (MoE) Qwen3-30B-A3B-Base. We mainly explore our results on math \citep{shao2024deepseekmath} and reasoning with search agent \citep{jin2025search}, and train our models on DAPO-Math-17K corpus~\citep{yu2025dapo} and the merged training sets of NQ~\citep{kwiatkowski2019natural} and HotpotQA~\citep{yang2018hotpotqa}, respectively. More experiments on Llama3.2 \citep{grattafiori2024llama} backbones are in Appendix \ref{app:llama}.

\textbf{Evaluation.}
We evaluate on 10 benchmarks across three settings:
\textbf{Math}: MATH500~\citep{hendrycks2021measuring}, AMC23~\citep{ouyang2022training}, AIME24 and AIME25~\citep{maa2024aime};
\textbf{General QA (Agent)}: Natural Questions (NQ)~\citep{kwiatkowski2019natural}, TriviaQA~\citep{joshi2017triviaqa}, and PopQA~\citep{mallen2022not};
\textbf{Multi-Hop QA (Agent)}: HotpotQA~\citep{yang2018hotpotqa}, Musique~\citep{trivedi2022musique}, and Bamboogle (Bamb)~\citep{press2023measuring}. The QA datasets are designed for reasoning with search agent. 
We report Avg@1 accuracy for most tasks and Avg@32 for the smaller AMC23, AIME24, and AIME25.
The experiments on math are conducted on Qwen3-8B-Base and Qwen3-30B-A3B-Base, and the experiments on agent are under Qwen3-8B-Base and Qwen3-14B-Base.

\textbf{Baselines.}
We integrate SIS into three representative RL objectives: token-level GRPO~\citep{shao2024deepseekmath} and DAPO~\citep{yu2025dapo}, and sequence-level GSPO~\citep{zheng2025group}.
We also compare against recent methods that address off-policy error through alternative mechanisms: clipping-based CISPO~\citep{chen2025minimax}, divergence-based DPPO-TV~\citep{qi2026rethinking}, and entropy-based Clip-Cov~\citep{cui2025entropy}.
\emph{Each SIS run is paired with its corresponding baseline under identical hyperparameters and infrastructure; the only modification is replacing $w_{i,t}(\theta)$ with the SIS-modified weight $\widetilde{w}_{i,t}(\theta)$ in \eqref{eq:modified_ratio}.}
All runs use FSDP~\citep{zhao2023pytorch} for training and vLLM~\citep{kwon2023efficient} for rollouts.
Detailed hyperparameters and descriptions of all comparison methods are provided in Appendix~\ref{app:hparams}.

\subsection{Main Results}
\label{sec:exp_main}
We summarize our main results in Tables \ref{tab:main} and \ref{tab:trick}. As can be seen in Table~\ref{tab:main}, our SIS consistently improves all policy-gradient algorithms on dense (Qwen3-8B/14B-Base) and MoE (Qwen3-30B-A3B-Base) architectures across math and agentic reasoning tasks, yielding average gains of up to $+6.37$ on math and $+2.71$ on agent benchmarks. 

On other hands,  Table~\ref{tab:trick} unifies recent policy-gradient-based methods, which are actually GRPO under varied $w_{i,t}(\theta) = g(w)$. 
SIS instead \emph{decomposes} in two-folds: accepted tokens are corresponded to $w_{i,t}(\theta) = 1$, while rejected tokens are handled by any base algorithm $g(\cdot)$.
This composability gives SIS a guaranteed floor and a high ceiling: the trivial $g(w)=w$ (no trick) already beats most methods, while pairing $g(\cdot)$ with a good algorithm e.g.., DAPO attains the best accuracy.

\begin{table}[t!]
\centering
\small
\caption{Performance of SIS as a plug-in across math and agent benchmarks. Shaded rows denote the application of SIS on top of each baseline. The best accuracy per model is \textbf{bolded}. $\Delta$ denotes the absolute gain over the corresponding baseline.}
\label{tab:main}

\resizebox{\linewidth}{!}{%
\begin{tabular}{l|ccccc c|ccccccc c}
\toprule
\multirow{2}{*}{Method} & \multicolumn{6}{c|}{Mathematical Reasoning} & \multicolumn{8}{c}{Agentic Search} \\
\cmidrule(lr){2-7} \cmidrule(lr){8-15}
& MATH500 & AMC23 & AIME24 & AIME25 & Avg & $\Delta$
& NQ & TriviaQA & PopQA & HotpotQA & Musique & Bamb & Avg & $\Delta$ \\
\midrule
\multicolumn{7}{c|}{\it Qwen3-8B-Base} & \multicolumn{8}{c}{\it Qwen3-8B-Base} \\
\midrule
GRPO       & 80.4 & 74.14 & 23.44 & 18.23 & 49.05 & -- & 49.14 & 66.99 & 44.68 & 46.24 & 19.24 & 48.8 & 45.85 & -- \\
\rowcolor{gray!12} \quad w/ SIS & \textbf{85.6} & 79.53 & 25.83 & 19.38 & 52.59 & +3.54 & 49.78 & 67.47 & \textbf{49.29} & \textbf{47.90} & 19.74 & \textbf{52.8} & \textbf{47.83} & +1.98 \\
\addlinespace[3pt]
DAPO       & 83.2 & 81.88 & 21.56 & 18.33 & 51.24 & -- & 49.61 & 65.69 & 45.95 & 42.89 & 16.17 & 47.2 & 44.59 & -- \\
\rowcolor{gray!12} \quad w/ SIS & 84.2 & \textbf{86.79} & 26.98 & 22.60 & \textbf{55.14} & +3.90 & \textbf{50.11} & 67.66 & 47.83 & 47.83 & 21.18 & 51.2 & 47.64 & +3.05 \\
\addlinespace[3pt]
GSPO       & \textbf{85.6} & 78.83 & 26.46 & 18.85 & 52.44 & -- & 49.72 & 67.53 & 46.93 & 47.12 & 20.52 & 45.6 & 46.24 & -- \\
\rowcolor{gray!12} \quad w/ SIS & 85.2 & 77.19 & \textbf{28.33} & \textbf{23.44} & 53.54 & +1.10 & 49.09 & \textbf{68.18} & 46.97 & 46.97 & \textbf{22.26} & \textbf{52.8} & 47.71 & +1.47 \\
\midrule
\multicolumn{7}{c|}{\it Qwen3-30B-A3B-Base} & \multicolumn{8}{c}{\it Qwen3-14B-Base} \\
\midrule
GRPO       & 82.8 & 78.59 & 25.52 & 18.23 & 51.29 & -- & 50.64 & 68.21 & 47.42 & 47.70 & 22.13 & 52.1 & 48.03 & -- \\
\rowcolor{gray!12} \quad w/ SIS & \textbf{85.8} & 83.91 & 35.94 & 25.00 & 57.66 & +6.37 & 50.86 & \textbf{70.13} & 48.54 & 47.94 & \textbf{23.91} & \textbf{56.2} & 49.60 & +1.57 \\
\addlinespace[3pt]
DAPO       & 84.2 & 81.02 & 29.17 & 18.96 & 53.34 & -- & 50.22 & 68.51 & 48.61 & 48.01 & 22.09 & 51.2 & 48.11 & -- \\
\rowcolor{gray!12} \quad w/ SIS & 85.2 & \textbf{84.77} & 35.10 & \textbf{27.40} & \textbf{58.12} & +4.76 & 51.55 & 69.25 & 50.05 & \textbf{49.98} & \textbf{23.91} & 54.4 & \textbf{49.86} & +1.75 \\
\addlinespace[3pt]
GSPO       & 75.0 & 77.34 & 34.90 & 25.52 & 53.19 & -- & 51.58 & 69.36 & 51.36 & 47.01 & 20.98 & 43.2 & 47.25 & -- \\
\rowcolor{gray!12} \quad w/ SIS & 85.4 & 79.38 & \textbf{35.41} & 24.38 & 56.14 & +2.95 & \textbf{52.22} & 68.22 & \textbf{51.58} & 47.47 & 21.02 & 46.4 & 47.82 & +0.57 \\
\bottomrule
\end{tabular}%
}
\end{table}

\begin{table}[h]
\centering
\small
\setlength{\tabcolsep}{4pt}
\caption{Trick-free training on Qwen3-8B-Base math benchmarks. All variants build on GRPO; the $w_{i,t}(\theta)$ column gives each method's effective coefficient (definitions of $\mathrm{TV}_{i,t}$, $\mathrm{Cov}_{i,t}$ in Appendix~\ref{app:trick_defs}). SIS sets $\widetilde w=1$ for accepted tokens and applies any trick $g(w)$ to the rejected ones. Best \textbf{bolded}, second-best \underline{underlined}; $\Delta$ is the relative Avg gain over GRPO.}
\label{tab:trick}
\resizebox{\linewidth}{!}{%
\begin{tabular}{l|l l|ccccc|c}
\toprule
Method & \multicolumn{2}{c|}{$w_{i,t}(\theta)$} & MATH500 & AMC23 & AIME24 & AIME25 & Avg & $\Delta$ (\%) \\
\midrule
\multicolumn{9}{l}{\emph{No correction}}\\
w/o IS & \multicolumn{2}{c|}{$1$} & 79.6 & 67.95 & 23.04 & 18.46 & 47.26 & -3.65 \\
w/o clip & \multicolumn{2}{c|}{$w$} & 78.6 & 72.34 & 22.82 & 18.75 & 48.13 & -0.92 \\
\midrule
\multicolumn{9}{l}{\emph{Baseline}}\\
GRPO & \multicolumn{2}{c|}{$\mathrm{clip}(w,\,1-\varepsilon,\,1+\varepsilon)$} & 80.4 & 74.14 & 23.44 & 18.23 & 49.05 & -- \\
\midrule
\multicolumn{9}{l}{\emph{Stabilization tricks}}\\
\quad + Clip-Higher (DAPO) & \multicolumn{2}{c|}{$\mathrm{clip}(w,\,1-\varepsilon_{\rm lo},\,1+\varepsilon_{\rm hi})$} & 83.2 & 81.88 & 21.56 & 18.33 & 51.24 & $+4.46$ \\
\quad + Seq-level IS (GSPO) & \multicolumn{2}{c|}{$\mathrm{clip}(s_i,\,1\pm\varepsilon),\ s_i=(\textstyle\prod_t w_{i,t})^{1/|y_i|}$} & \textbf{85.6} & 78.83 & 26.46 & 18.85 & 52.44 & $+6.91$ \\
\quad + Clipped-IS (CISPO) & \multicolumn{2}{c|}{$\mathrm{sg}[\min(w,\,1+\varepsilon_{\rm hi})]$} & 84.0 & \underline{85.55} & 26.46 & 21.77 & \underline{54.44} & $+11.0$ \\
\quad + Divergence (DPPO-TV) & \multicolumn{2}{c|}{$w\cdot\bone[\mathrm{TV}_{i,t}\le\delta]$} & 81.4 & 73.83 & 23.85 & 18.85 & 49.48 & $+0.88$ \\
\quad + Clip-Cov (Entropy) & \multicolumn{2}{c|}{$w\cdot\bone[\mathrm{Cov}_{i,t}\le\omega]$} & 80.8 & 76.56 & 24.17 & 18.23 & 49.94 & $+1.81$ \\
\midrule
\emph{Ours} & \emph{accept} & \emph{reject $g(w)$} & & & & & & \\
\quad + SIS (vanilla) & $1$ & $w$ & 82.2 & 76.72 & \textbf{28.33} & 20.00 & 51.81 & $+5.63$ \\
\quad + SIS (GRPO) & $1$ & $\mathrm{clip}(w,\,1\pm\varepsilon)$ & \textbf{85.6} & 79.53 & 25.83 & 19.38 & 52.59 & $+7.22$ \\
\rowcolor{gray!12} \quad + SIS (DAPO) & $1$ & $\mathrm{clip}(w,\,1-\varepsilon_{\rm lo},\,1+\varepsilon_{\rm hi})$ & 84.2 & \textbf{86.79} & \underline{26.98} & \underline{22.60} & \textbf{55.14} & $+12.4$ \\
\bottomrule
\end{tabular}%
}
\end{table}

\subsection{SIS Improves Training Stability}
\label{sec:exp_stable}

Next, we explore whether SIS improves training stability under challenging off-policy regimes, since it transfers most off-policy tokens into on-policy. 
We design three stress tests: (1)~increasing policy staleness by reusing rollouts for up to 16 gradient updates, (2)~amplifying train-inference mismatch via MoE routing divergence, and (3)~removing clipping entirely.
All comparisons keep the paired protocol from \S\ref{sec:exp_setup}.

\begin{figure}[t!]
  \centering
  \includegraphics[width=1.0\linewidth]{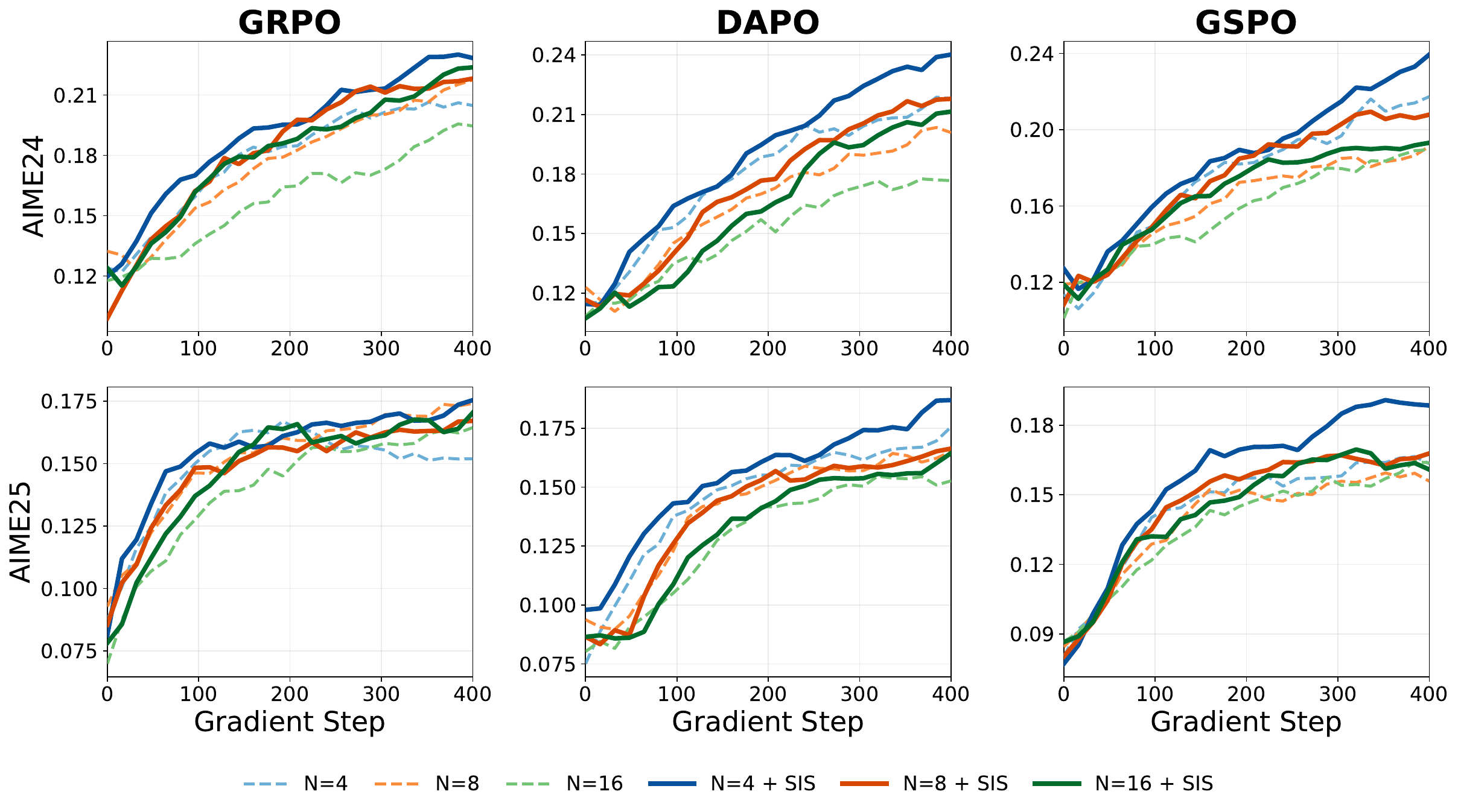}
  \vspace{-1.5ex}
  \caption{Training dynamics under increasing policy staleness ($N$=4, 8, 16) on Qwen3-8B-Base. Columns: GRPO, DAPO, GSPO. Rows: AIME24 and AIME25 accuracy. Dashed lines are baselines; solid lines add SIS. Increasing $N$ degrades the baselines, whereas the corresponding SIS variants stay consistently above their baselines at every staleness level.}
  \label{fig:N}
\end{figure}

\textbf{SIS resists degradation from stale rollouts.}
Reusing rollouts for multiple gradient updates is common practice to amortize generation cost, but increases the degree of off-policy-ness, and results in performance drop \citep{fu2026areal}. 
We vary $N$, the number of mini-batch updates per rollout, from 4 to 16 on Qwen3-8B-Base to explore  whether SIS remains effective as staleness grows.
Figure~\ref{fig:N} reveals two observations:
\emph{(i)~staleness is harmful in itself}: increasing $N$ degrades accuracy regardless of whether SIS is applied, confirming that reusing stale off-policy rollouts hurts RL, though increasing computational efficiency;
\emph{(ii)~SIS yields a stable gain at every staleness level}: each SIS variant stays consistently above its corresponding baseline across all $N$, because SIS maintains an appreciable accept rate even under large $N$ (Appendix~\ref{app:staleness}). 
Together, these results show that SIS reliably recovers performance under stale rollouts, persisting under longer training (detailed in Tab.~\ref{tab:staleness}).

\textbf{SIS handles train-inference mismatch in MoE models.}
In MoE architectures, the rollout engine and the trainer may route tokens through different experts, producing mismatched log-probabilities even at the same checkpoint~\citep{zheng2025stabilizing,IcePop2025}.
We test whether SIS can handle this system-level off-policy divergence on Qwen3-30B-A3B-Base, comparing against Rollout Routing Replay (R3)~\citep{ma2025stabilizing}, a method specifically designed for MoE routing mismatch.
As shown in Figure~\ref{fig:moe}, \emph{vanilla DAPO collapses entirely}: accuracy peaks and then degrades, entropy drops to near zero, and gradient norms spike by two orders of magnitude.
In contrast, both R3 and our SIS mitigate part of this instability, while \emph{combining SIS with R3 yields the strongest accuracy and the most stable optimization}, indicating that SIS and R3 address complementary sources of off-policy divergence.

\textbf{SIS enables clipping-free training.}
\label{sec:exp_trick}
SIS is itself a stabilization mechanism: with clipping fully disabled, GRPO+SIS still beats clip-based GRPO on math and agent tasks (Appendix~\ref{app:noclip}).

\subsection{Sensitivity and Token-Level Analysis}
\label{sec:exp_ablation}
We finally ablate the only hyperparameter of SIS and inspect its token-level acceptance behavior.

\textbf{Top-$K$ sensitivity.} 
SIS introduces a single hyperparameter $K$ that defines the rejection envelope. Sweeping $K\in\{10,50,100\}$ on Qwen3-8B-Base, Table~\ref{tab:ablation_topk} shows that \emph{SIS is robust to $K$}: average accuracy varies by at most $\sim$1 point, and we use $K{=}10$ throughout the paper.

\begin{figure}[t!]
  \centering
  \includegraphics[width=\linewidth]{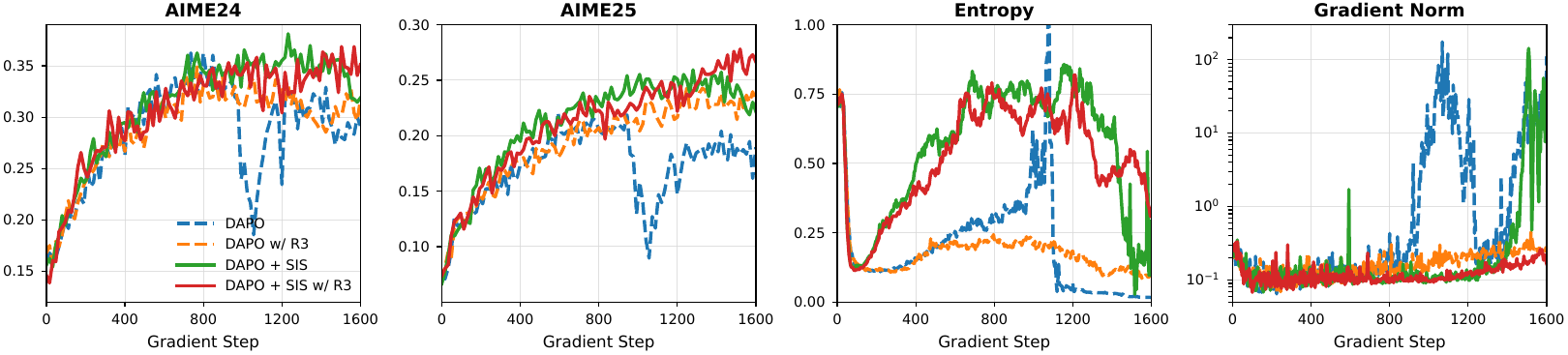}
  \vspace{-1.5ex}
  \caption{Training dynamics on Qwen3-30B-A3B-Base (MoE) with DAPO. Columns: AIME24/25 accuracy, entropy, gradient norm. Vanilla DAPO (blue dashed) suffers entropy collapse and gradient spikes; R3 (orange dashed) and SIS (green solid) each mitigate part of the instability; combining SIS with R3 (red solid) gives the strongest accuracy and the most stable optimization.}
  \label{fig:moe}
  \vspace{-0.5ex}
\end{figure}

\textbf{Token-level acceptance behavior.}
We further inspect what the rejection sampler accepts or rejects at the token level. Figure~\ref{fig:distinctive_tokens} visualizes the most distinctive tokens among accepted (green) and rejected (red) samples. Accepted tokens are dominated by mathematical reasoning terms (e.g., ``frac''), while rejected ones contain more formatting and web artifacts (e.g., ``\$image''), suggesting that SIS preferentially grants on-policy status to tokens aligned with the target reasoning distribution.

\begin{figure}[!htbp]
\centering
\small
\begin{minipage}[t]{0.49\linewidth}
\centering
\vspace{0pt}
\captionof{table}{Sensitivity to top-$K$ truncation on Qwen3-8B-Base.}
\label{tab:ablation_topk}
\vspace{-0.2ex}
\resizebox{\linewidth}{!}{%
\begin{tabular}{l|c|ccccc}
\toprule
Method      & $K$     & MATH500 & AMC23 & AIME24 & AIME25 & Avg   \\
\midrule
\multirow{3}{*}{GRPO + SIS}
            & 10      & 85.6    & 79.53 & 25.83  & 19.38  & 52.59 \\
            & 50      & 84.6    & 78.83 & 26.67  & 20.00  & 52.53 \\
            & 100     & 84.8    & 77.50 & 26.88  & 19.79  & 52.24 \\ \midrule
\multirow{3}{*}{DAPO + SIS}
            & 10      & 84.2    & 85.00 & 25.52  & 22.19  & 54.23 \\
            & 50      & 84.2    & 84.45 & 26.98  & 22.60  & 54.56 \\ 
            & 100     & 84.0    & 85.08 & 26.77  & 22.19  & 54.51 \\ \midrule
\multirow{3}{*}{GSPO + SIS}
            & 10      & 83.2    & 76.25 & 27.92  & 21.67  & 52.26 \\
            & 50      & 85.2    & 77.19 & 28.33  & 23.44  & 53.54 \\
            & 100     & 83.8    & 76.95 & 26.67  & 22.40  & 52.46 \\
\bottomrule
\end{tabular}
}
\end{minipage}%
\hfill
\begin{minipage}[t]{0.49\linewidth}
\centering
\vspace{0pt}
\includegraphics[width=0.49\linewidth]{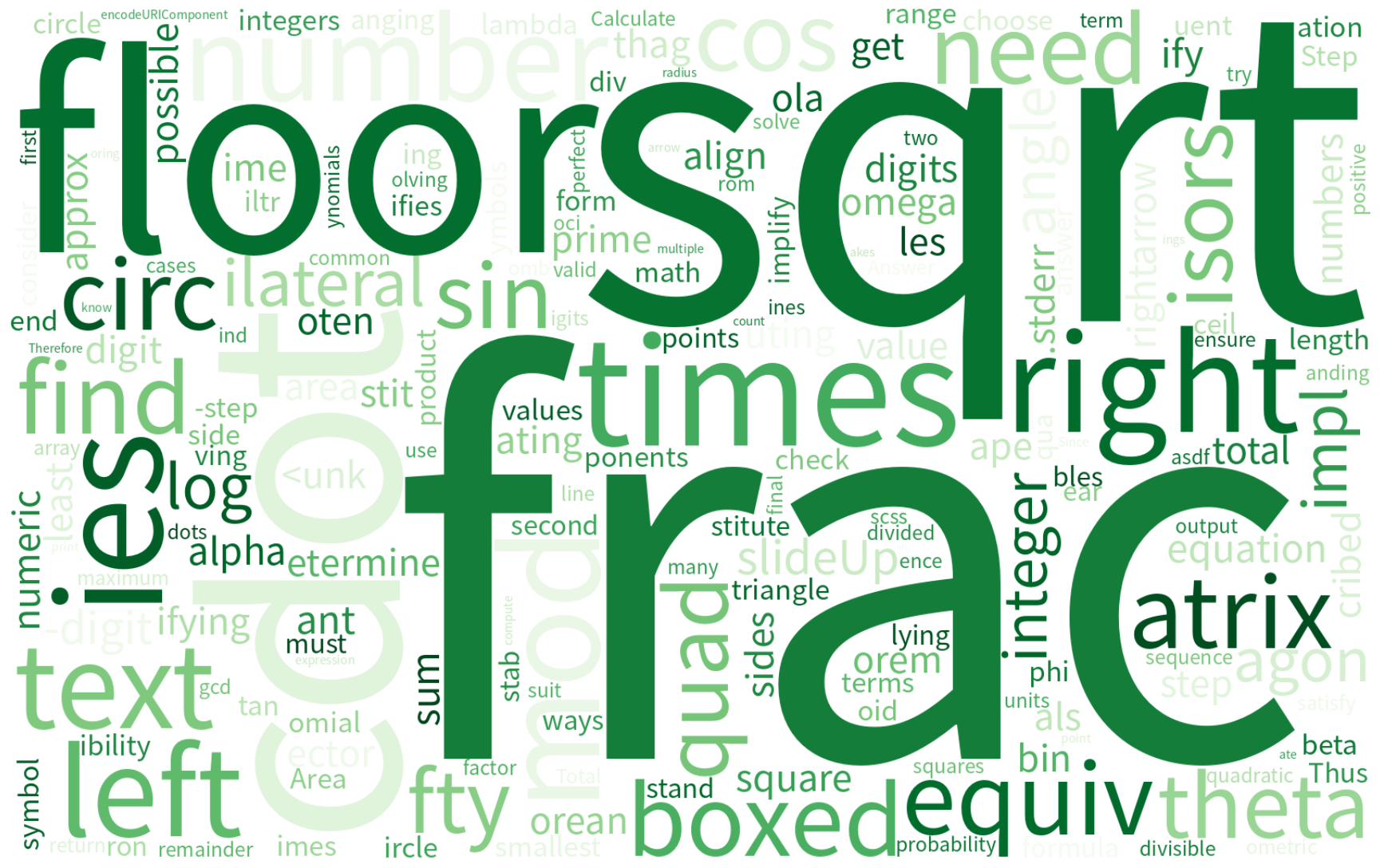}\hfill
\includegraphics[width=0.49\linewidth]{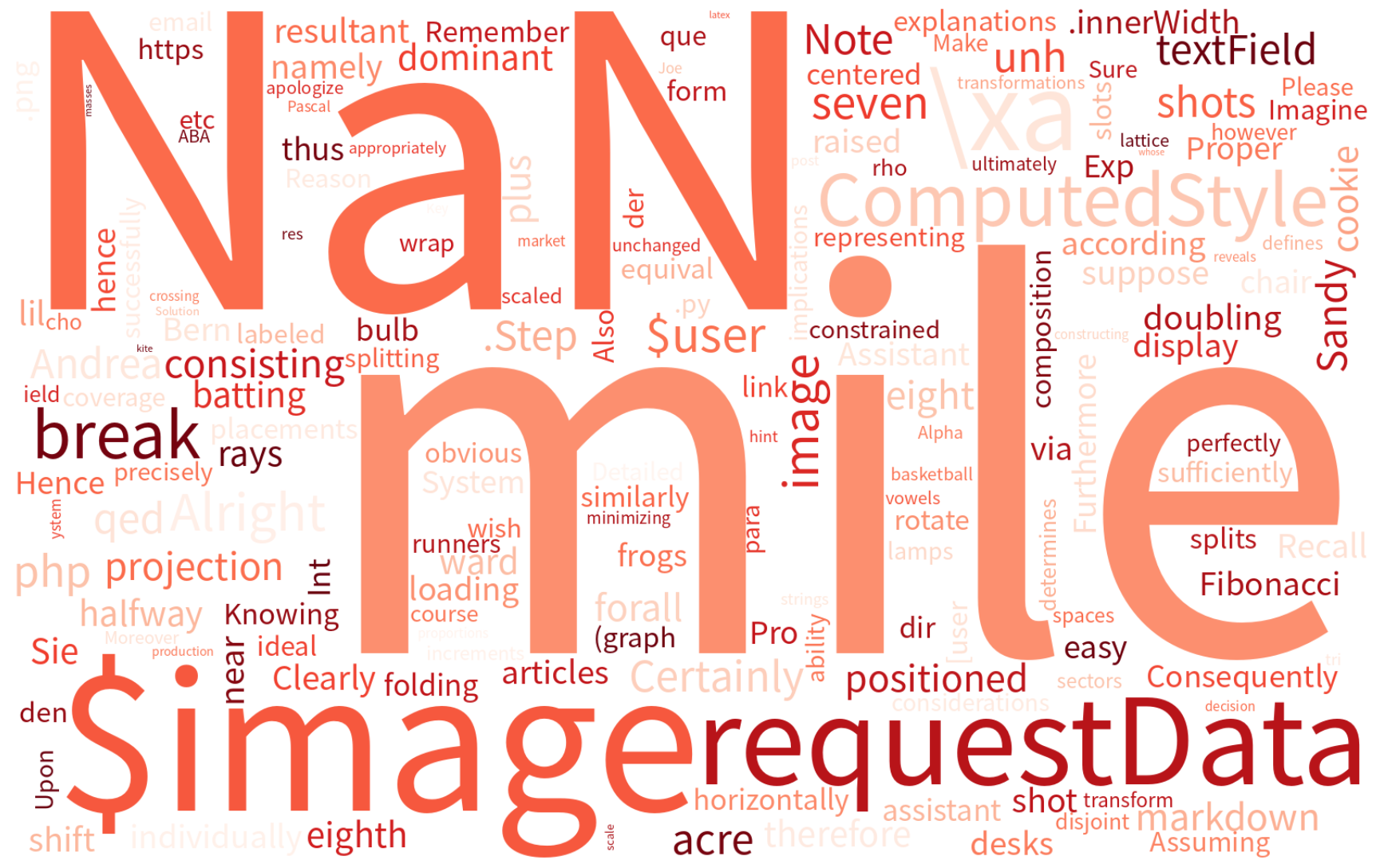}
\vspace{-1.0ex}
\caption{Token-level SIS decisions. Word clouds of distinctive tokens in accepted(left, green) vs.\ rejected(right, red) samples. SIS favors mathematical reasoning tokens and rejects formatting/web artifacts.}
\label{fig:distinctive_tokens}
\end{minipage}
\end{figure}

\section{Related Work}

\textbf{Importance Sampling Ratios for LLM RL.} Importance-sampling correction for off-policy training dates back to~\citet{schulman2015trust,sutton1998reinforcement}, but the per-token ratio introduces large variance and unstable training~\citep{schulman2017ppo,qi2026rethinking}. The dominant remedy is hard clipping~\citep{shao2024deepseekmath,yu2025dapo,cui2025entropy}, which bounds the ratio but discards gradient information from highly off-policy tokens; soft clipping~\citep{chen2025minimax,gao2025soft} retains gradients at the cost of sensitive hyperparameters, and other clipping variants~\citep{cagatan2026cfpo,leroux2025topr,lei2026minpro} follow the same recipe—shrinking large ratios to suppress variance, which inevitably \emph{attenuates the gradient signal}. SIS instead \emph{transfers} off-policy tokens to on-policy, fixing the off-policy mismatch at its root.

\textbf{Rejection Sampling and Sample Selection.}
Rejection sampling has appeared in the LLM pipeline only as a \emph{data curation} step rather than a sampling mechanism: RSO~\citep{liu2024statistical,tajwar2026maximum} filters preference pairs before reward-model training, best-of-$N$~\citep{gui2024bonbon,beirami2025bestofn,sun2024speculative,li2026ets} re-ranks candidates at inference, and other work removes OOD or unsafe samples from the training corpus~\citep{guo2025deepseek}. Unlike SIS, these response-level pre/post-processing steps do not modify the policy gradient.

\textbf{Stabilizing RL Post-Training.}
A parallel line of work stabilizes training at the \emph{systems} side, where staleness arises from asynchronous rollouts~\citep{fu2026areal,noukhovitch2025faster,gao2025beyond,team2025every} or from train--inference engine mismatch in MoE models~\citep{zheng2025stabilizing}. Common remedies all \emph{limit} the influence of off-policy data: masking high-deviation tokens~\citep{li2025trust}, reusing only sparse rollouts~\citep{luo2026sparse,wan2026bapo}, replaying inference-time routing via R3~\citep{ma2025stabilizing}, or filtering discrepant tokens via IcePop~\citep{IcePop2025}. SIS instead \emph{converts} off-policy tokens into on-policy, and is orthogonal to these tricks; in particular, combining SIS with R3 on MoE yields strictly stronger stability than either alone.

\vspace{-2mm}
\section{Conclusion}
\vspace{-2mm}
We present \emph{Selective Importance Sampling} (SIS) to tackle the off-policy bottleneck in post-training of LLMs, where stale rollouts and train--inference mismatch bias the policy gradient. 
Rather than \emph{suppressing} the correction ratio as in prior remedies, we reframe the problem as \emph{distribution conversion}. 
Concretely, SIS performs a token-level acceptance--rejection test with a top-$K$ approximation: accepted tokens are treated as fully on-policy, while rejected ones retain the standard IS correction. 
SIS only modifies the importance ratio, so it plugs into any policy-gradient objective with negligible overhead; it also provably tightens the off-policy approximation-error bound. 
Empirically, SIS delivers consistent gains across mainstream algorithms and dense/MoE backbones, improves stability under heavy rollout reuse and MoE routing mismatch. 

\bibliography{iclr2026_conference}
\bibliographystyle{iclr2026_conference}
\newpage
\appendix
\section{Proofs}
\label{app:proofs}

\subsection{Proof of On-policy Certificate}
\label{app:proof_certificate}

\certificate*
\begin{proof}

For any $v\in\cV$, applying Bayes' rule gives
\begin{equation}
\small
    \mathbb{P}(y_t = v \mid z_t = 1)
    = \frac{\mathbb{P}(z_t = 1 \mid y_t = v)\, \pi_{\theta_{\mathrm{old}}}(v \mid x, y_{<t})}{\mathbb{P}(z_t = 1)}.
\end{equation}
The acceptance probability conditioned on $y_t = v$ is
\begin{equation}
\small
    \mathbb{P}(z_t = 1 \mid y_t = v) = \frac{w_t(v)}{M_t} = \frac{\pi_\theta(v\mid x,y_{<t})}{M_t\, \pi_{\theta_{\mathrm{old}}}(v\mid x,y_{<t})}.
\end{equation}
The marginal acceptance probability is
\begin{equation}
\small
    \mathbb{P}(z_t = 1) = \sum_{v'\in\cV} \frac{\pi_\theta(v'\mid x,y_{<t})}{M_t\,\pi_{\theta_{\mathrm{old}}}(v'\mid x,y_{<t})} \cdot \pi_{\theta_{\mathrm{old}}}(v'\mid x,y_{<t}) = \frac{1}{M_t} \sum_{v'\in\cV} \pi_\theta(v'\mid x,y_{<t}) = \frac{1}{M_t}.
\end{equation}
Substituting back, we have
\begin{equation}
\small
    \mathbb{P}(y_t = v \mid z_t = 1) = \frac{\dfrac{\pi_\theta(v\mid x,y_{<t})}{M_t\,\pi_{\theta_{\mathrm{old}}}(v\mid x,y_{<t})} \cdot \pi_{\theta_{\mathrm{old}}}(v\mid x,y_{<t})}{1\,/\,M_t} = \pi_\theta(v\mid x,y_{<t}).
\end{equation}
\end{proof}

\subsection{Proof of Approximation Error}
\label{app:proof_approx}
\errorbound*
\begin{proof}
For a single response, the sequence-level gradient is
\begin{equation}
\small
\bg_{\mathrm{seq}} = A(x,y) \prod_{j=1}^T w_j(\theta) \sum_{t=1}^T s_t,
\end{equation}
while the token-level surrogate uses
\begin{equation}
\small
\bg_{\mathrm{tok}} = A(x,y) \sum_{t=1}^T w_t(\theta)\, s_t,
\end{equation}
where $s_t = \nabla_\theta \log \pi_\theta(y_t \mid x, y_{<t})$. Their discrepancy is
\begin{equation}
    \small
E = \left\| A(x,y) \sum_{t=1}^T \left(\prod_{j=1}^T w_j(\theta) - w_t(\theta)\right) s_t \right\| 
\le |A(x,y)| \sum_{t=1}^T \left|\prod_{j=1}^T w_j(\theta) - w_t(\theta)\right|\, \|s_t\|. \label{eq:triangle}
\end{equation}
Define $R = \prod_{j=1}^T w_j(\theta)$ and $\Delta_j = \log w_j(\theta)$, so that
$|R - w_t(\theta)| = w_t(\theta)\, \left|\prod_{j\neq t} w_j(\theta) - 1\right|$.
Applying the elementary inequality $|e^x - 1| \le e^{|x|} - 1$ to the product term gives
\begin{equation}
\small
\left|\prod_{j\neq t} w_j(\theta) - 1\right|
= \left| e^{\sum_{j\neq t} \Delta_j} - 1 \right|
\le e^{\left|\sum_{j\neq t} \Delta_j\right|} - 1
\le e^{\sum_{j\neq t} |\Delta_j|} - 1.
\end{equation}
Hence,
\begin{equation}
\small
|R - w_t(\theta)| \le e^{|\Delta_t|}\left(e^{\sum_{j\neq t} |\Delta_j|} - 1\right)
= e^{\sum_{j=1}^T |\Delta_j|} - e^{|\Delta_t|}
\le e^{D} - 1,
\end{equation}
where $D = \sum_{t=1}^T |\!\log w_t(\theta)|$. Plugging this bound into \eqref{eq:triangle} yields
\begin{equation}
\small
E \le |A(x,y)|\, (e^{D} - 1) \sum_{t=1}^T \|s_t\|.
\end{equation}
\end{proof}

\errorreduction*
\begin{proof}
For accepted tokens ($z_t=1$), $\widetilde{w}_t(\theta) = 1$ in the forward pass, so $\log \widetilde{w}_t(\theta) = 0$. Thus the total log-importance deviation becomes
\begin{equation}\small
D_{\mathrm{SIS}} = \sum_{t=1}^T |\!\log \widetilde{w}_t(\theta)| = \sum_{t=1}^T (1-z_t)\,|\!\log w_t(\theta)|.
\end{equation}
Defining $C = \sum_{t=1}^T z_t\,|\!\log w_t(\theta)| \ge 0$, we have $D_{\mathrm{SIS}} = D - C \le D$. Since $f(u) = e^u - 1$ is strictly increasing,
\begin{equation}\small
e^{D_{\mathrm{SIS}}} - 1 = e^{D-C} - 1 \le e^{D} - 1,
\end{equation}
with strict inequality if $C>0$. Applying Theorem~\ref{thm:error_bound} to the SIS gradient estimate gives the desired bound on $E_{\mathrm{SIS}}$.
\end{proof}

\subsection{Proof of top-$K$ Approximation}
\topkapprox*
\begin{proof}
Let $p(v) = \pi_\theta(v \mid x, y_{<t})$ denote the target distribution and $q(v) = \pi_{\theta_{\mathrm{old}}}(v \mid x, y_{<t})$ the proposal. Define $\cV_K = \mathrm{TopK}(q, K)$ and $\xi_K = \sum_{v \notin \cV_K} p(v)$.

Within $\cV_K$, the approximate envelope constant is $\widehat{M} = \max_{v \in \cV_K} \frac{p(v)}{q(v)}$. The acceptance probability for $v \in \cV_K$ is $\bbP(A=1 \mid v) = \frac{p(v)}{\widehat{M} \, q(v)}$, and for $v \notin \cV_K$, $\bbP(A=1 \mid v) = 0$.

The conditional distribution of accepted tokens is
\begin{equation}
\small
    \hat{\pi}_\theta(v) = \bbP(v \mid A=1) = \frac{\bbP(A=1 \mid v)\, q(v)}{\sum_{v' \in \cV_K} \bbP(A=1 \mid v')\, q(v')} = \frac{p(v) \,/\, \widehat{M}}{\sum_{v' \in \cV_K} p(v') \,/\, \widehat{M}} = \frac{p(v)}{\sum_{v' \in \cV_K} p(v')},
\end{equation}
for $v \in \cV_K$, and $\hat{\pi}_\theta(v) = 0$ for $v \notin \cV_K$, so that proves our first conclusion. Next, we compute the total variation distance
\begin{align}
\small
    D_{\rm TV}(\hat{\pi}_\theta \parallel p) &= \frac{1}{2} \sum_{v \in \cV} |\hat{\pi}_\theta(v) - p(v)| \nonumber \\
    &= \frac{1}{2} \sum_{v \in \cV_K} \left( \frac{p(v)}{1 - \xi_K} - p(v) \right) + \frac{1}{2} \sum_{v \notin \cV_K} p(v) \nonumber \\
    &= \frac{1}{2} \sum_{v \in \cV_K} p(v) \cdot \frac{\xi_K}{1 - \xi_K} + \frac{1}{2} \xi_K \nonumber \\
    &= \frac{1}{2} (1 - \xi_K) \cdot \frac{\xi_K}{1 - \xi_K} + \frac{1}{2} \xi_K = \xi_K.
\end{align}
\end{proof}

\section{Algorithm}
\label{app:algorithm}
We summarize the complete SIS procedure with the top-$K$ envelope approximation in Algorithm~\ref{alg:sis}. For each token position, the algorithm first identifies the top-$K$ most probable tokens under the behavior policy and restricts the rejection envelope to this set. Tokens falling inside $\mathcal{V}_K$ undergo an acceptance test: accepted tokens are certified as on-policy and receive unit importance weight, while rejected tokens and those outside $\mathcal{V}_K$ retain their original importance ratio. The resulting modified ratios are then directly substituted into any standard policy-gradient objective.

\begin{algorithm}[h]
\caption{SIS with Top-$K$ Envelope Approximation}
\label{alg:sis}
\begin{algorithmic}[1]
\REQUIRE Prompt $x$; responses $\{y_i\}_{i=1}^G \sim \pi_{\theta_{\mathrm{old}}}(\cdot \mid x)$; top-$K$ parameter $K$
\ENSURE Modified importance ratios $\{\widetilde{w}_{i,t}(\theta)\}$
\FOR{each response $y_i$, $i = 1, \ldots, G$}
    \FOR{each token position $t = 1, \ldots, |y_i|$}
        \STATE Compute token-level importance ratio $w_{i,t}(\theta)$
        \STATE Select top-$K$ set $\mathcal{V}_K$ under $\pi_{\theta_{\mathrm{old}}}(\cdot \mid x, y_{i,<t})$
        \STATE Compute approximation constant $\widehat{M}_t$ restricted to $\mathcal{V}_K$ \hfill $\triangleright$ Eq.~\eqref{eq:envelope}
        \IF{$y_{i,t} \in \mathcal{V}_K$}
            \STATE Accept token with probability $w_{i,t}(\theta)/\widehat{M}_t$; set $z_t \leftarrow 1$ if accepted \hfill $\triangleright$ Eq.~\eqref{eq:accept}
        \ELSE
            \STATE $z_t \leftarrow 0$ \hfill $\triangleright$ Tokens outside $\mathcal{V}_K$ remain off-policy
        \ENDIF
        \STATE Set $\widetilde{w}_{i,t}(\theta) \leftarrow 1$ if $z_t=1$, else retain $w_{i,t}(\theta)$ \hfill $\triangleright$ Eq.~\eqref{eq:modified_ratio}
    \ENDFOR
\ENDFOR
\STATE Plug $\{\widetilde{w}_{i,t}(\theta)\}$ into the policy-gradient objective \hfill $\triangleright$ e.g., GRPO, DAPO, GSPO
\end{algorithmic}
\end{algorithm}

\section{Experimental Setup Details}
\label{app:hparams}

\subsection{Math and Agent Experiments}

Table~\ref{tab:hparams_math} (math) and Table~\ref{tab:hparams_agent} (agent) summarize the hyperparameters. For each baseline, the vanilla run and the SIS run share the same training and evaluation hyperparameters; SIS only adds the top-$K$ approximation parameter used to compute the modified importance ratio. For agent training, we follow the settings of Search-R1 \citep{jin2025search}, using an E5 retriever \citep{wang2022text} and the 2018 Wikipedia dump \citep{karpukhin2020dense} as the corpus.

\begin{table}[h]
\small
\centering
\begin{minipage}[t]{0.49\linewidth}
\centering
\caption{Hyperparameters used in math experiments.}
\label{tab:hparams_math}
\resizebox{\linewidth}{!}{%
\begin{tabular}{l|c|c|c}
\toprule
Hyperparameter                      & GRPO                  & DAPO & GSPO    \\
\midrule
Learning rate                       & \multicolumn{3}{c}{$1\times 10^{-6}$} \\
Mini-batch size (mbs)               & \multicolumn{3}{c}{256}               \\
Responses per prompt                & \multicolumn{3}{c}{8}                 \\
Max prompt length                   & \multicolumn{3}{c}{1,024}             \\
Max response length                 & \multicolumn{3}{c}{3,072}             \\
Gradient steps                      & \multicolumn{3}{c}{1,600}             \\
Train temperature                   & \multicolumn{3}{c}{1.0}               \\
Eval temperature / top-$p$          & \multicolumn{3}{c}{1.0 / 0.7}         \\
Entropy loss coefficient            & \multicolumn{3}{c}{0}                 \\
Training engine                     & \multicolumn{3}{c}{FSDP}              \\
Inference engine                    & \multicolumn{3}{c}{vLLM}              \\
\midrule
KL loss coefficient                 & $1\times 10^{-3}$ & 0 & 0             \\
SIS top-$K$ (dense / MoE)           & 10 / 10 & 30 / 30 & 50 / 30                         \\
$\varepsilon_{\rm low}$             & 0.2 & 0.2 & $3\times 10^{-4}$                          \\
$\varepsilon_{\rm high}$            & 0.2 & 0.28 & $4\times 10^{-4}$              \\
\midrule
Global batch size (gbs)             & \multicolumn{3}{c}{$256 \times N$}    \\
GPUs                                & \multicolumn{3}{c}{32 (colocated)}    \\
\bottomrule
\end{tabular}%
}
\end{minipage}%
\hfill
\begin{minipage}[t]{0.49\linewidth}
\centering
\caption{Hyperparameters used in agent experiments.}
\label{tab:hparams_agent}
\resizebox{\linewidth}{!}{%
\begin{tabular}{l|c|c|c}
\toprule
Hyperparameter                      & GRPO                  & DAPO & GSPO    \\
\midrule
Learning rate                       & \multicolumn{3}{c}{$1\times 10^{-6}$} \\
Mini-batch size (mbs)               & \multicolumn{3}{c}{64}               \\
Responses per prompt                & \multicolumn{3}{c}{8}                 \\
Max prompt length                   & \multicolumn{3}{c}{4,096}             \\
Max response length                 & \multicolumn{3}{c}{4,096}             \\
Gradient steps                      & \multicolumn{3}{c}{800}             \\
Train temperature                   & \multicolumn{3}{c}{1.0}               \\
Eval temperature / top-$p$          & \multicolumn{3}{c}{1.0 / 1.0}         \\
Entropy loss coefficient            & \multicolumn{3}{c}{0}                 \\
Training engine                     & \multicolumn{3}{c}{FSDP}              \\
Inference engine                    & \multicolumn{3}{c}{vLLM}              \\
\midrule
KL loss coefficient                 & $1\times 10^{-3}$ & 0 & 0             \\
SIS top-$K$ (dense)                 & 50 & 50 & 50                        \\
$\varepsilon_{\rm low}$             & 0.2 & 0.2 & $3\times 10^{-4}$                          \\
$\varepsilon_{\rm high}$            & 0.2 & 0.28 & $4\times 10^{-4}$              \\
\midrule
Global batch size (gbs)             & \multicolumn{3}{c}{256}    \\
GPUs                                & \multicolumn{3}{c}{16 (colocated)}    \\
\bottomrule
\end{tabular}%
}
\end{minipage}
\end{table}

For all training-dynamics figures we plot only the first $400$ gradient steps: beyond that the policy has largely converged and $\pi_\theta\!\approx\!\pi_{\theta_{\rm old}}$, so the curves flatten and the off-policy gap of interest is no longer visible.

\subsection{Reward Design}
\label{app:reward}

All experiments use a purely outcome-based, rule-based reward; we apply no learned reward model, no format or length shaping, and no intermediate-step rewards. For a prompt $x$ and a generated response $y$, let $\hat a(y)$ be the answer extracted from $y$ and $a^\star$ (or $\mathcal{A}^\star$) denote the gold answer set.

\textbf{Math.}
We extract the final answer from the \texttt{\textbackslash boxed\{\}} field of $y$ and compare it against the reference with a symbolic equivalence checker $\mathrm{Eq}(\cdot,\cdot)$ that handles numeric and algebraic normalization. The reward is binary,
\begin{equation}
\small
r_{\rm math}(x,y) \;=\;
\begin{cases}
1, & \hat a(y)\ \text{is parseable from \texttt{\textbackslash boxed\{\}}}\ \text{and}\ \mathrm{Eq}\!\left(\hat a(y),\, a^\star\right)=\text{True}, \\
0, & \text{otherwise}.
\end{cases}
\end{equation}
Responses that omit a parseable \texttt{\textbackslash boxed\{\}} answer receive reward $0$.

\textbf{Agent.}
Following Search-R1~\citep{jin2025search}, the reward is the exact-match (EM) score of the final answer:
\begin{equation}
\small
r_{\rm agent}(x,y) \;=\; \mathrm{EM}(a_{\rm pred},\, a_{\rm gold}),
\end{equation}
where $a_{\rm pred}$ is the final answer extracted from the model response $y$ (enclosed in the \answer{$\cdot$} tags), and $a_{\rm gold}$ denotes the ground-truth answer. Search-engine invocations and retrieved passages are not rewarded directly; they affect the reward only through the final answer.

\subsection{Prompt Template}

For math reasoning, we adopt the same prompt format as Qwen-Math~\citep{yang2024qwen2}.
For agent reasoning, we follow Search-R1~\citep{jin2025search} and use a template that enforces a minimal yet sufficient structure without introducing content-specific biases.
The template organizes the model output into three iterative stages: (i) a reasoning phase, (ii) a search engine invocation phase, and (iii) a final answer.
We illustrate the templates in Table~\ref{tab:searchr1_template}.

\begin{table}[t]
\centering
\small
\caption{Prompt templates for math and agent tasks. The placeholder \texttt{question} is replaced by the actual query during both training and inference.}
\begin{tabular}{@{}l p{0.78\columnwidth}@{}}
\toprule
\textbf{Task} & \textbf{Prompt Template} \\
\midrule
Math & Please reason step by step, and put your final answer within \texttt{\textbackslash boxed\{\}}. \texttt{question}. \\
\midrule
Agent & Answer the given question. You must conduct reasoning inside \think{and} first every time you get new information.
After reasoning, if you find you lack some knowledge, you can call a search engine by \search{query}, and it will return the top searched results between \info{and}.
You can search as many times as you want.
If you find no further external knowledge needed, you can directly provide the answer inside \answer{and} without detailed illustrations.
For example, \answer{xxx}.
Question: \texttt{question}. \\
\bottomrule
\end{tabular}
\label{tab:searchr1_template}
\end{table}

\section{Additional Experimental Results}
\label{app:additional_exp}

\subsection{SIS Enables Stable Training Without Clipping}
\label{app:noclip}

To test whether SIS alone suffices for training stability, we disable clipping entirely in GRPO+SIS on Qwen3-8B-Base. Table~\ref{tab:noclip} shows that \emph{SIS without clipping still outperforms the clip-based GRPO} on both math and agent tasks, confirming that SIS is a stabilization mechanism in its own right, and one that does not sacrifice gradient information.

\begin{table}[!htbp]
\centering
\small
\caption{SIS does not require clipping on Qwen3-8B-Base. Best is \textbf{bolded}.}
\label{tab:noclip}
\resizebox{0.8\linewidth}{!}{%
\begin{tabular}{l|cccccc}
\toprule
\multicolumn{7}{c}{\textit{Math}} \\
\midrule
Method & MATH500 & AMC23 & AIME24 & AIME25 & \multicolumn{2}{c}{Avg} \\
\midrule
GRPO                   & 80.4 & 74.14 & 23.44 & 18.23 & \multicolumn{2}{c}{49.05} \\
\quad + SIS            & \bf{85.6} & \bf{79.53} & \bf{25.83} & \bf{19.38} & \multicolumn{2}{c}{\bf{52.59}} \\
\quad + SIS (w/o Clip) & 82.2 & 76.72 & 28.33 & 20.00 & \multicolumn{2}{c}{51.81} \\
\midrule
\multicolumn{7}{c}{\textit{Agent}} \\
\midrule
Method & NQ & TriviaQA & PopQA & HotpotQA & Musique & Bamb \\
\midrule
GRPO                   & 49.14 & 66.99 & 44.68 & 46.24 & 19.24 & 48.8 \\
\quad + SIS            & \bf{49.78} & 67.47 & 46.64 & \bf{47.90} & 19.74 & \bf{52.8} \\
\quad + SIS (w/o Clip) & 49.58 & \bf{67.90} & \bf{46.88} & 46.56 & \bf{20.94} & 52.0 \\
\bottomrule
\end{tabular}%
}
\end{table}

\subsection{Accept Rate Beyond Dense Math}
\label{app:accept_rate_extra}

To show that the on-policy certificate of SIS remains effective beyond the dense-math regime, we further inspect its token-level accept rate on the larger MoE backbone (Qwen3-30B-A3B-Base) and on the agentic search setting (Qwen3-14B-Base). As shown in Figure~\ref{fig:accept_extra}, SIS maintains an appreciable accept rate across all three policy-gradient algorithms in both settings. The accept rate is naturally lower on the agentic task, since interleaving reasoning with diverse tool calls induces a larger distributional shift between $\pi_{\theta_{\mathrm{old}}}$ and $\pi_\theta$; nevertheless it stays well above $0.6$, indicating that a substantial fraction of tokens remain effectively on-policy under $\pi_\theta$.

\begin{figure}[!htbp]
\centering
\includegraphics[width=0.9\linewidth]{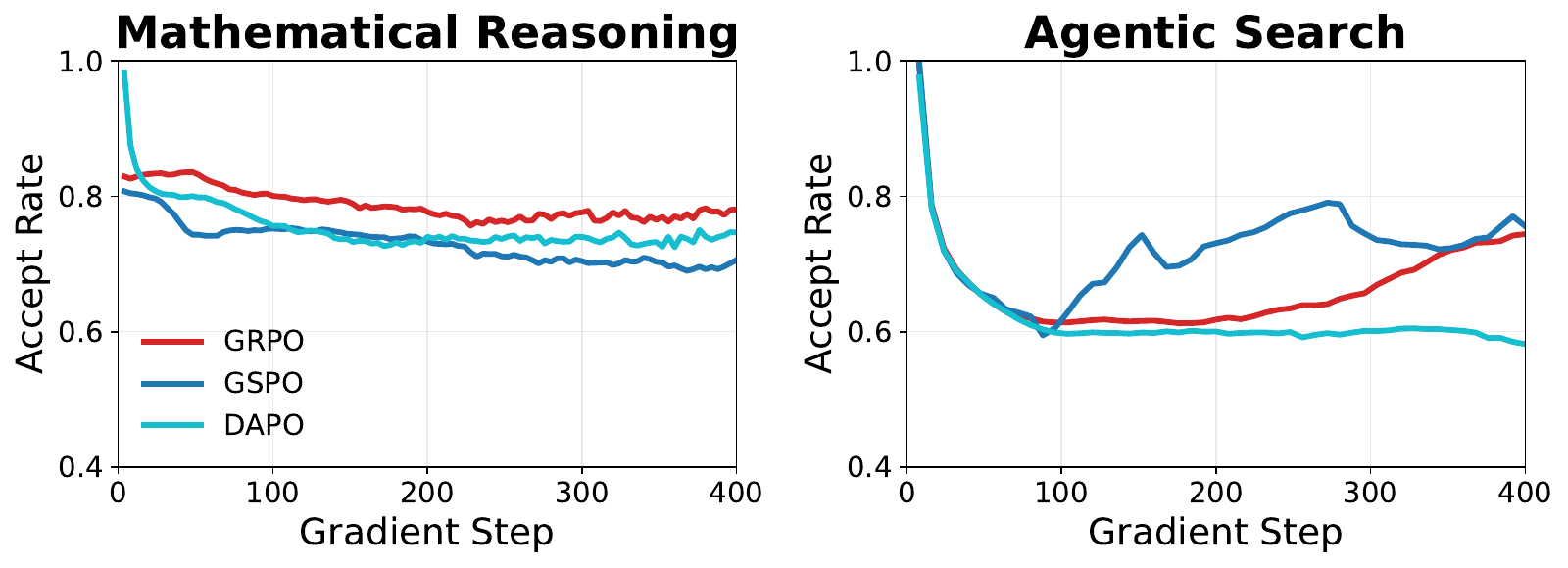}
\caption{Accept rate of SIS on Qwen3-30B-A3B-Base under math reasoning (left) and Qwen3-14B-Base under agentic search (right). SIS sustains an appreciable accept rate across all algorithms in both settings. The lower rate on the agentic task reflects the larger distributional shift introduced by interleaved tool calls between $\pi_{\theta_{\mathrm{old}}}$ and $\pi_\theta$.}
\label{fig:accept_extra}
\end{figure}

\subsection{Negligible Time Cost of SIS}
\label{app:overhead}

We profile the wall-clock cost introduced by SIS on top of vanilla GRPO. The setup follows our main math experiments: Qwen3-8B-Base, batch size $512$ with $n=8$ rollouts per prompt, response length $4{,}096$, trained on $8\times$NVIDIA A800-80GB GPUs. We instrument the five SIS-specific operations: the top-$K$ extraction during the \texttt{old\_log\_prob} stage, and the per-microbatch off-policy-to-on-policy correction during \texttt{update\_actor}. We sweep $K\in\{10, 50, 100\}$ to assess how the overhead scales.

Figure~\ref{fig:sis_overhead} decomposes the per-step wall-clock into rollout, $\pi_{\theta_{\mathrm{old}}}$ log-prob, actor update, and the additional SIS time. The total SIS overhead is $\sim 5.2$\,s per step, which corresponds to roughly $1\%$ of the $465$--$499$\,s step time across all three values of $K$. Furthermore, the overhead is essentially $K$-invariant: because the dominant cost is the off-policy-to-on-policy ratio reconstruction, which is fully parallelized on GPU and scales with $O(B\cdot L)$ rather than $K$, varying $K$ from $10$ to $100$ leaves the wall-clock virtually unchanged. The only price paid by a larger $K$ is GPU memory for the cached top-$K$ logits, not compute time.

\begin{figure}[!htbp]
\centering
\includegraphics[width=0.95\linewidth]{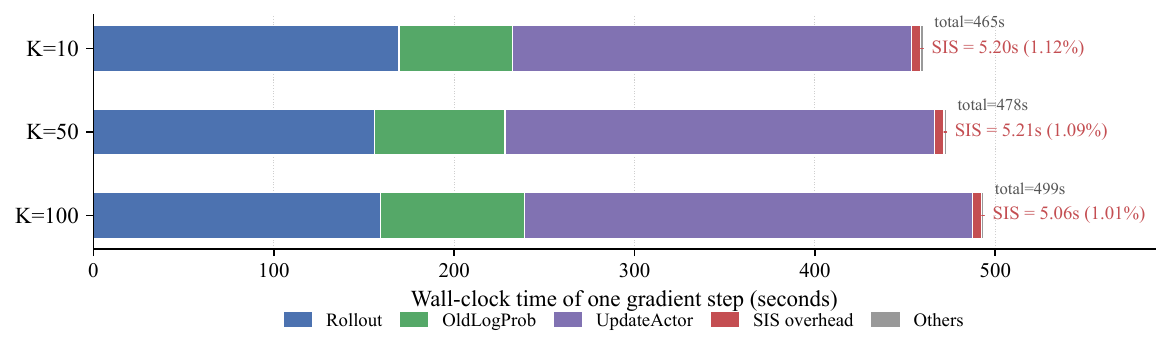}
\caption{Per-step wall-clock decomposition of GRPO+SIS on Qwen3-8B-Base across $K\in\{10,50,100\}$. The red segment is the SIS overhead: near-constant $\sim 5.2$\,s ($\sim 1\%$ of step time) and $K$-invariant.}
\label{fig:sis_overhead}
\end{figure}

\subsection{Training Dynamics}
\label{app:reward_dyn}

Beyond final accuracy and accept rate, we further visualize three training-side dynamics for all SIS runs in Table~\ref{tab:main}: token-level training reward, policy entropy, and gradient norm, summarized in Figure~\ref{fig:dyn_all}.

\textbf{Training reward.} The first column of Figure~\ref{fig:dyn_all} shows that the three policy-gradient algorithms paired with SIS all exhibit smooth, monotonically increasing reward curves on Qwen3-8B (math), Qwen3-30B-A3B-Base (math), Qwen3-8B (agentic search) and Qwen3-14B (agentic search), and converge to comparable levels within each setting. This indicates that SIS optimizes the training objective stably regardless of the underlying policy-gradient algorithm or the model scale.

\textbf{Policy entropy.} The middle column plots the policy entropy over the same horizon. Across all four settings, SIS keeps the entropy bounded away from zero, avoiding the entropy collapse commonly observed for aggressive off-policy training. On math tasks the entropy first drops as the policy sharpens on the training distribution, then stabilizes; GSPO+SIS in particular exhibits a mild early rebound before settling into a healthy range. On the agentic search tasks the entropy decreases more gradually due to the longer trajectories induced by tool calls, but again remains well above zero throughout training.

\textbf{Gradient norm.} The right column shows the gradient norm on a log scale. The gradient norm stays bounded within roughly $10^{-2}$--$10^{1}$ throughout training and does not diverge in any of the four settings. Occasional spikes appear on the agentic search runs, mostly on GSPO, reflecting the higher variance induced by variable-length tool-augmented rollouts; however these spikes remain within one order of magnitude of the median and do not lead to sustained instability, consistent with the smooth reward curves in the first column.

\begin{figure}[!htbp]
\centering
\includegraphics[width=0.9\linewidth]{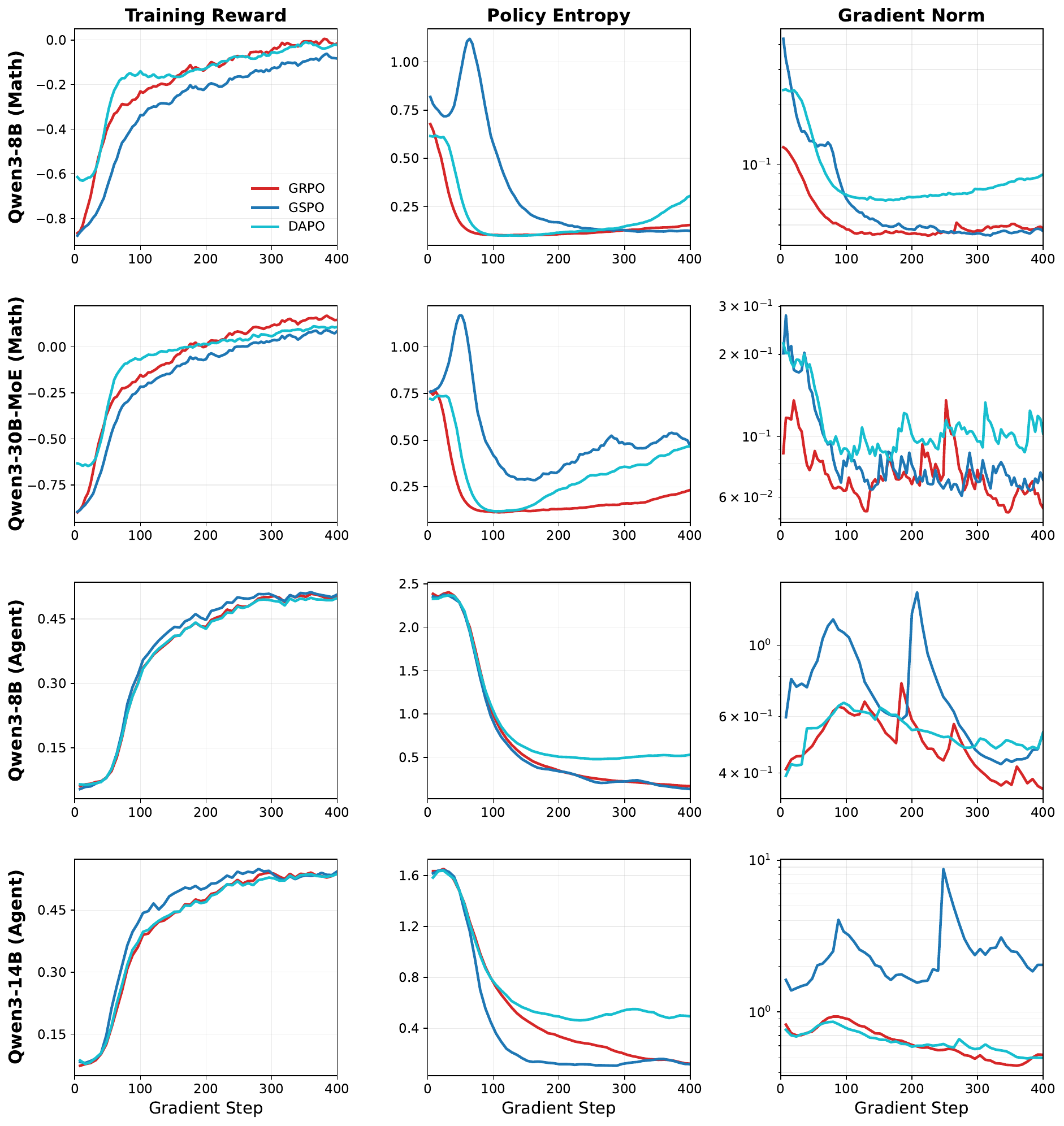}
\caption{Training dynamics of SIS over the first $400$ gradient steps for all entries in Table~\ref{tab:main}. Rows: Qwen3-8B/Qwen3-30B-A3B-Base on math and Qwen3-8B/Qwen3-14B on agentic search. Columns: training reward, policy entropy, and gradient norm (log scale). SIS produces stable reward growth and non-collapsed entropy across all backbones and algorithms; occasional gradient-norm spikes on agentic search runs are transient and do not lead to sustained instability.}
\label{fig:dyn_all}
\end{figure}

\subsection{Robustness Under Stale Rollouts}
\label{app:staleness}

We vary $N$, the number of mini-batch updates per rollout, to increase the drift from $\pi_{\theta_{\mathrm{old}}}$ to $\pi_\theta$ within each rollout cycle. Table~\ref{tab:staleness} reports final scores after $1600$ gradient steps under all three settings $N=4$, $8$, and $16$ grouped by algorithm, so the effect of staleness is directly comparable within each method. Two patterns emerge. First, staleness is harmful in itself: as $N$ grows from $4$ to $16$, average accuracy drops for every algorithm regardless of whether SIS is applied, confirming that reusing stale off-policy rollouts degrades RL. Second, SIS yields a stable gain at every staleness level: it improves over the corresponding baseline at all $N$ for GRPO, DAPO, and GSPO, and this advantage holds throughout the full $1600$-step training.

\textbf{Accept rate under staleness.} To further explain why SIS remains effective as $N$ grows, we plot the token-level accept rate during training in Figure~\ref{fig:accept_N}. Across all three algorithms and all staleness levels, the accept rate stays consistently in the $0.80$--$0.95$ range and never collapses. This confirms that even with $N=16$ a substantial fraction of tokens generated by $\pi_{\theta_{\mathrm{old}}}$ are still on-policy under $\pi_\theta$, so the on-policy certificate of SIS continues to provide a meaningful correction signal under high staleness.

\begin{figure}[!htbp]
\centering
\includegraphics[width=0.8\linewidth]{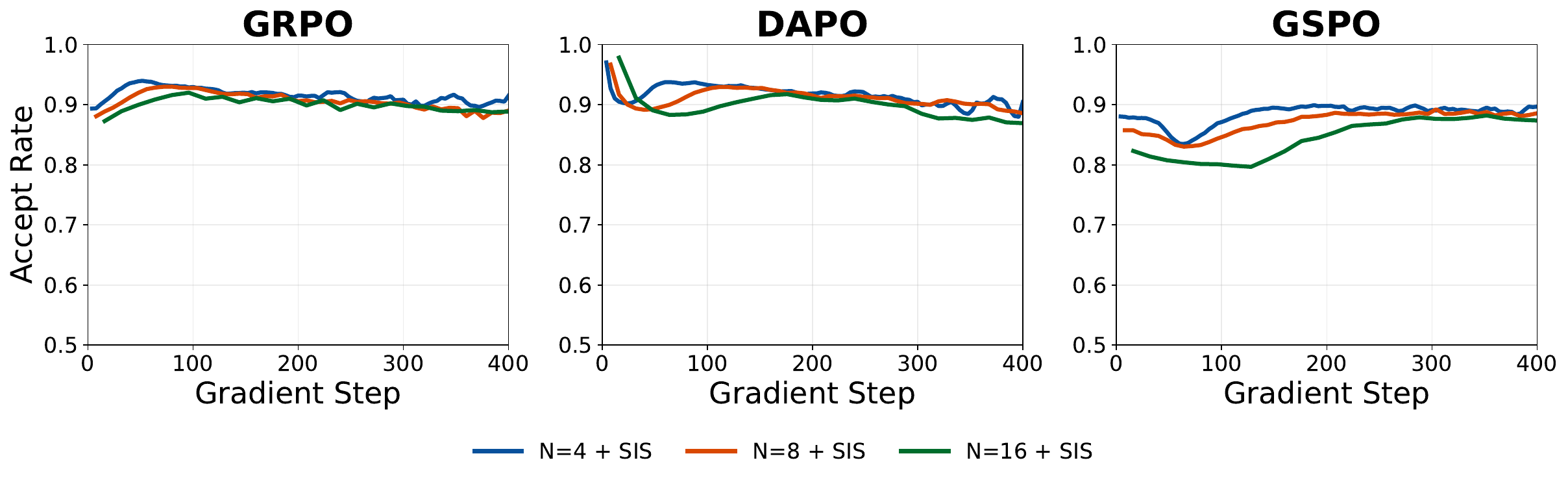}
\vspace{-1.5ex}
\caption{Token-level accept rate of SIS during the first $400$ gradient steps on Qwen3-8B-Base under staleness $N\in\{4,8,16\}$. Across all three algorithms, the accept rate stays in $0.80$--$0.95$ throughout training, indicating that most tokens remain effectively on-policy even at $N=16$.}
\label{fig:accept_N}
\end{figure}

\begin{table}[t]
\centering
\small
\caption{Performance of SIS under varying policy staleness on Qwen3-8B-Base after $1600$ gradient steps, where staleness $N$ is the number of mini-batch updates per rollout. Results are grouped by algorithm so that the effect of increasing $N$ is directly comparable within each method. Shaded rows denote the application of SIS on top of each baseline. Within each algorithm--staleness pair, the higher accuracy per metric is \textbf{bolded}.}
\label{tab:staleness}
\setlength{\tabcolsep}{5pt}
\begin{tabular}{cl|ccccc}
\toprule
$N$ & Method & MATH500 & AMC23 & AIME24 & AIME25 & Avg \\
\midrule
\multicolumn{7}{c}{\it GRPO} \\
\midrule
                    & Base         & 80.4 & 74.14 & 23.44 & 18.23 & 49.05 \\
\rowcolor{gray!12} \multirow{-2}{*}{4}  & \quad w/ SIS & \textbf{85.6} & \textbf{79.53} & \textbf{25.83} & \textbf{19.38} & \textbf{52.59} \\
\addlinespace[2pt]
                    & Base         & 81.6 & 71.33 & 22.91 & \textbf{20.73} & 49.14 \\
\rowcolor{gray!12} \multirow{-2}{*}{8}  & \quad w/ SIS & \textbf{83.8} & \textbf{78.59} & \textbf{24.08} & 20.10 & \textbf{51.64} \\
\addlinespace[2pt]
                    & Base         & 83.0 & 69.53 & 23.54 & 18.23 & 48.58 \\
\rowcolor{gray!12} \multirow{-2}{*}{16} & \quad w/ SIS & \textbf{84.6} & \textbf{77.03} & \textbf{24.69} & \textbf{21.35} & \textbf{51.92} \\
\midrule
\multicolumn{7}{c}{\it DAPO} \\
\midrule
                    & Base         & 83.2 & 81.88 & 21.56 & 18.33 & 51.24 \\
\rowcolor{gray!12} \multirow{-2}{*}{4}  & \quad w/ SIS & \textbf{84.2} & \textbf{86.79} & \textbf{26.98} & \textbf{22.60} & \textbf{55.14} \\
\addlinespace[2pt]
                    & Base         & 83.6 & 78.59 & \textbf{24.79} & 19.17 & 51.54 \\
\rowcolor{gray!12} \multirow{-2}{*}{8}  & \quad w/ SIS & \textbf{84.2} & \textbf{83.28} & 24.69 & \textbf{21.46} & \textbf{53.41} \\
\addlinespace[2pt]
                    & Base         & 83.4 & 73.83 & 20.42 & 17.60 & 48.81 \\
\rowcolor{gray!12} \multirow{-2}{*}{16} & \quad w/ SIS & \textbf{85.0} & \textbf{81.17} & \textbf{24.48} & \textbf{21.04} & \textbf{52.92} \\
\midrule
\multicolumn{7}{c}{\it GSPO} \\
\midrule
                    & Base         & \textbf{85.6} & \textbf{78.83} & 26.46 & 18.85 & 52.44 \\
\rowcolor{gray!12} \multirow{-2}{*}{4}  & \quad w/ SIS & 85.2 & 77.19 & \textbf{28.33} & \textbf{23.44} & \textbf{53.54} \\
\addlinespace[2pt]
                    & Base         & 83.4 & 73.75 & 25.31 & 18.02 & 50.12 \\
\rowcolor{gray!12} \multirow{-2}{*}{8}  & \quad w/ SIS & 83.4 & \textbf{78.20} & \textbf{25.52} & \textbf{21.77} & \textbf{52.22} \\
\addlinespace[2pt]
                    & Base         & 82.0 & 71.72 & 22.92 & \textbf{19.48} & 49.03 \\
\rowcolor{gray!12} \multirow{-2}{*}{16} & \quad w/ SIS & \textbf{83.8} & \textbf{73.83} & \textbf{23.85} & 19.27 & \textbf{50.19} \\
\bottomrule
\end{tabular}
\end{table}

\FloatBarrier

\subsection{Generalization Beyond Qwen}
\label{app:llama}
To verify that our findings generalize beyond a single model family, we further evaluate SIS on Llama-3.2-3B-Instruct~\citep{grattafiori2024llama}. We use the same DAPO-Math-17K training corpus and keep all hyperparameters identical to the Qwen3-8B-Base math setting (Table~\ref{tab:hparams_math}), ensuring a controlled comparison.

As presented in Table~\ref{tab:llama}, SIS again delivers consistent plug-in gains across all three algorithms, with the largest absolute lift on DAPO ($+4.47$ Avg). In relative terms the improvements are even more pronounced ($+11.0\%$ for GRPO and $+14.4\%$ for DAPO), notably larger than the corresponding relative gains on Qwen3-8B-Base. We attribute this to the smaller Llama model exhibiting a wider policy--rollout gap, making the off-policy correction provided by SIS even more impactful. These results confirm that SIS delivers robust gains independent of the underlying model architecture and family.

\begin{table}[t]
\centering
\small
\caption{Performance of SIS as a plug-in on Llama-3.2-3B-Instruct across math benchmarks. Shaded rows denote the application of SIS on top of each baseline. The best accuracy per column is \textbf{bolded}. $\Delta$ denotes the absolute gain over the corresponding baseline.}
\label{tab:llama}
\begin{tabular}{l|ccccc c}
\toprule
Method     & MATH500 & AMC23 & AIME24 & AIME25 & Avg & $\Delta$ \\
\midrule
GRPO       & 50.0 & 44.77 & 12.40 & 0.10 & 26.82 & -- \\
\rowcolor{gray!12} \quad w/ SIS & 51.2 & 53.36 & 13.75 & \textbf{1.77} & 29.76 & +2.94 \\
\addlinespace[3pt]
DAPO       & 57.2 & 52.74 & 13.96 & 0.21 & 31.03 & -- \\
\rowcolor{gray!12} \quad w/ SIS & \textbf{58.0} & \textbf{65.47} & \textbf{17.50} & 1.04 & \textbf{35.50} & +4.47 \\
\addlinespace[3pt]
GSPO       & 50.2 & 43.36 & 11.15 & 0.52 & 26.31 & -- \\
\rowcolor{gray!12} \quad w/ SIS & 50.6 & 43.36 & 13.94 & 0.63 & 27.13 & +0.82 \\
\bottomrule
\end{tabular}
\end{table}

\section{Stabilization Tricks Through the Importance-Sampling Lens}
\label{app:other_instantiations}
\label{app:trick_defs}

We organize Table~\ref{tab:trick} by writing every stabilization trick as a token-level rescaling of the importance-sampling (IS) coefficient
\begin{equation}
\small
w_{i,t}(\theta) \;=\; \frac{\pi_\theta(y_{i,t}\mid x,y_{i,<t})}{\pi_{\theta_{\rm old}}(y_{i,t}\mid x,y_{i,<t})}.
\end{equation}
Under this lens, each method amounts to a specific surrogate $g(w_{i,t}(\theta))$ used in place of $w_{i,t}(\theta)$ inside the policy-gradient objective. Clipping-based methods (GRPO, DAPO, GSPO, CISPO) reshape $w$ directly; DPPO-TV and Clip-Cov instead constrain the update through an auxiliary signal, which we recast as multiplying $w$ by an indicator that masks out-of-region tokens. We first restate the six representative methods in this unified form, and then use DAPO and GSPO as concrete cases to illustrate how SIS is instantiated on top of an existing trick.

\subsection{Representative stabilization methods}

\textbf{GRPO~\citep{shao2024deepseekmath}.}
The vanilla token-level PPO-style ratio with a symmetric clip bound $\varepsilon$,
\begin{equation}
\small
g_{\rm GRPO}(w) \;=\; \mathrm{clip}(w,\, 1-\varepsilon,\, 1+\varepsilon).
\end{equation}

\textbf{DAPO~\citep{yu2025dapo}.}
Decoupled clip bounds $\varepsilon_{\rm low}, \varepsilon_{\rm high}$ with the KL regularization removed,
\begin{equation}
    \small
\begin{aligned}
    \cJ_{\rm DAPO}(\theta) = ~&\mE_{(x,a)\sim \cD, \{y_i\}\sim\pi_{\theta_{\rm old}}} \\
    &\left[\tfrac{1}{\sum_{i}{|y_i|}}\!\sum_{i,t} \min\!\left(w_{i,t}(\theta)\widehat{A}_{i,t},\; \mathrm{clip}(w_{i,t}(\theta), 1-\varepsilon_{\rm low}, 1+\varepsilon_{\rm high}) \widehat{A}_{i,t} \right)\right],
\end{aligned}
\end{equation}
restricted to prompts with $0<|\{y_i\mid \texttt{is\_equivalent}(a,y_i)\}|<G$.

\textbf{GSPO~\citep{zheng2025group}.}
A sequence-level ratio obtained by averaging log-ratios,
\begin{equation}
    \small
    s_i(\theta) \;=\; \exp\!\left(\tfrac{1}{|y_i|}\sum_{t=1}^{|y_i|}\log \tfrac{\pi_\theta(y_{i,t}\mid x, y_{i,<t})}{\pi_{\theta_{\mathrm{old}}}(y_{i,t}\mid x, y_{i,<t})}\right),
\end{equation}
which then enters a sequence-level clipped objective with bounds $\varepsilon_{\rm low}, \varepsilon_{\rm high}$.

\textbf{CISPO~\citep{chen2025minimax}.}
A stop-gradient one-sided clip that detaches the high tail,
\begin{equation}
\small
g_{\rm CISPO}(w) \;=\; \mathrm{sg}\!\left[\min(w,\, 1+\varepsilon_{\rm hi})\right].
\end{equation}

\textbf{DPPO-TV~\citep{qi2026rethinking}.}
Instead of bounding $w$, it bounds the per-token total-variation distance between the current and behavior token distributions,
\begin{equation}
\small
\mathrm{TV}_{i,t}(\theta) \;=\; \tfrac12\big\|\pi_\theta(\cdot\mid x,y_{i,<t}) - \pi_{\theta_{\rm old}}(\cdot\mid x,y_{i,<t})\big\|_1 ,
\end{equation}
and discards the gradient of any token whose policy drift exceeds a threshold $\delta$, which is equivalent to the effective coefficient $w\cdot\bone[\mathrm{TV}_{i,t}\le\delta]$.

\textbf{Clip-Cov~\citep{cui2025entropy}.}
It attributes entropy collapse to a small set of tokens with large log-probability--advantage covariance,
\begin{equation}
\small
\mathrm{Cov}_{i,t}(\theta) \;=\; \big(\log\pi_\theta(y_{i,t}\mid x,y_{i,<t}) - \overline{\log\pi}\big)\big(\widehat A_{i,t} - \overline A\big) ,
\end{equation}
where $\overline{\log\pi}$ and $\overline A$ are batch means. It detaches the gradient of tokens whose covariance exceeds a threshold $\omega$, giving the effective coefficient $w\cdot\bone[\mathrm{Cov}_{i,t}\le\omega]$.

\subsection{Instantiating SIS: DAPO and GSPO as cases}
\label{app:instant}
Since SIS only replaces the token-level ratio $w_{i,t}(\theta)$ with its on-policy-equivalent counterpart $\widetilde{w}_{i,t}(\theta)$ (cf.\ \eqref{eq:modified_ratio}), instantiating SIS on top of any of the methods above amounts to a one-line substitution $w_{i,t}\!\to\!\widetilde w_{i,t}$ in the corresponding objective. We illustrate this with DAPO and GSPO, the two cases used in our experiments; analogous substitutions apply to the remaining methods and are left to future empirical study.

\paragraph{Case 1: DAPO + SIS.}
Substituting $w_{i,t}\!\to\!\widetilde w_{i,t}$ into the DAPO objective yields
\begin{equation}
    \small
\begin{aligned}
    \cJ_{\rm DAPO}^{\rm SIS}(\theta) = ~&\mE_{
    (x,a)\sim \cD, \{y_i\}_{i=1}^G\sim\pi_{\theta_{\rm old}}(\cdot\mid x)} \\&\left[\frac{1}{\sum_{i=1}^G{|y_i|}}\sum_{i=1}^G  \sum_{t=1}^{|y_i|} \min\!\left(\widetilde{w}_{i,t}(\theta)\widehat{A}_{i,t},\; \mathrm{clip}(\widetilde{w}_{i,t}(\theta), 1-\varepsilon_{\rm low}, 1+\varepsilon_{\rm high}) \widehat{A}_{i,t} \right)\right], \\
    \text{s.t.} ~&0 < \left|\{y_i\mid \texttt{is\_equivalent}(a,y_i) \}\right|<G,
\end{aligned}
\end{equation}
where $a$ is the ground-truth answer to the prompt $x$.

\paragraph{Case 2: GSPO + SIS.}
For sequence-level GSPO, accepted tokens contribute a zero log-ratio, giving the SIS-modified sequence ratio
\begin{equation}
\label{eq:gspo_is}
    \small
    \widetilde{s_i}(\theta) = \exp\!\left(\frac{1}{|y_i|}\left(\sum_{(i,t)\in \cA} \log \frac{\pi_\theta(y_{i,t}\mid x, y_{i,<t})}{\mathrm{sg}[\pi_\theta(y_{i,t}\mid x, y_{i,<t})]} + \sum_{(i,t)\notin \cA} \log \frac{\pi_\theta(y_{i,t}\mid x, y_{i,<t})}{\pi_{\theta_{\mathrm{old}}}(y_{i,t}\mid x, y_{i,<t})}\right)\right),
\end{equation}
which then enters the GSPO clipped objective
\begin{equation}
    \small
    \cJ_{\rm GSPO}^{\rm SIS}(\theta) = \mE_{x\sim \cD, \{y_i\}_{i=1}^G\sim\pi_{\theta_{\rm old}}(\cdot\mid x)}\left[\frac{1}{G}\sum_{i=1}^G \min\!\left(\widetilde{s_i}(\theta)\widehat{A}_i,\; \mathrm{clip}(\widetilde{s_i}(\theta), 1-\varepsilon_{\rm low}, 1+\varepsilon_{\rm high}) \widehat{A}_i \right)\right].
\end{equation}

\end{document}